%% file: camera-ready.tex
\documentclass{article}

    \PassOptionsToPackage{numbers, compress}{natbib}

 \usepackage[main, final]{neurips_2026}

\usepackage[utf8]{inputenc} 
\usepackage[T1]{fontenc}    
\usepackage[colorlinks=true, urlcolor=utburnt, citecolor=utburnt, linkcolor=utburnt]{hyperref}       
\usepackage{url}            
\usepackage{booktabs}       
\usepackage{amsfonts}       
\usepackage{nicefrac}       
\usepackage{microtype}      
\usepackage{xcolor}         

\usepackage{algorithm}
\usepackage{kotex}
\usepackage{multirow}
\usepackage{makecell} 
\usepackage{graphicx} 
\usepackage{adjustbox}
\usepackage{float}
\usepackage{wrapfig}
\usepackage{colortbl}
\usepackage[endLComment=, italicComments=false]{algpseudocodex}
\usepackage{algorithm}

\usepackage[most]{tcolorbox}

\usepackage{amsmath}
\usepackage{amssymb}
\usepackage{mathtools}
\usepackage{amsthm}

\usepackage{pifont}
\usepackage{xspace}
\usepackage{dsfont}

\usepackage{amsmath,amsfonts}

\definecolor{utburnt}{HTML}{BF5700}
\definecolor{utgray}{HTML}{333F48} 
\definecolor{myred}{HTML}{F54254}
\definecolor{myorange}{HTML}{FFB135}
\definecolor{mygreen}{HTML}{10BD35}
\definecolor{myblue}{HTML}{598BE7}
\definecolor{mypurple}{HTML}{9A1C6B}
\definecolor{plgray}{HTML}{999999}

\usepackage{lipsum}
\usepackage{comment}

\newcommand{\STAB}[1]{\begin{tabular}{@{}c@{}}#1\end{tabular}}

\theoremstyle{plain}
\newtheorem{theorem}{Theorem}[section]
\newtheorem{proposition}[theorem]{Proposition}

\theoremstyle{definition}

\theoremstyle{property}

\newtheorem{assumption}[theorem]{Assumption}
\theoremstyle{remark}

\title{Test-time Multi-agent Coordination by \\ Decomposed Value Gradient Flow}

\author{%
  Dongsu Lee \qquad 
  Haoran Xu \qquad
  Amy Zhang
  \\
  The University of Texas at Austin
  \\
  \texttt{dongsu.lee@utexas.edu}
  \\
  \color{utburnt}\texttt{\url{https://github.com/DongsuLeeTech/scout.git}}
}

\begin{document}

\maketitle

\begin{abstract}
Offline multi-agent reinforcement learning (MARL) faces a persistent trade-off. Expressive generative policies can represent multi-modal coordination in the data, but cannot distinguish high-value regions, while value-optimized policies exploit the learned Q-function but collapse the multi-modal into a single dominant mode. A single agent's mode collapse can break joint coordination, and simultaneous drift across agents can push the joint policy into unseen regions of the action space. We propose \textbf{scalable coordination via optimal unified transport (SCOUT)}, the first offline MARL framework to combine a generative foundation model with a learned value function through test-time action refinement. SCOUT trains two decoupled components: a flow-matching behavioral prior and a decomposed value function. At test-time, it transports behavioral samples toward high-value regions via Stein variational gradient descent. The number of transport steps controls adaptive test-time scaling, replacing a fixed regularization coefficient. Under the individual-global-max (IGM) principle, we prove a single-term KL bound on the joint soft-value gap that vanishes as transport converges, with an irreducible additive residual proportional to the IGM violation. Empirically, SCOUT achieves the best average performance across discrete and continuous offline MARL benchmarks and yields performance improvements in all offline-to-online configurations.
\end{abstract}

\input{figure/overview}

\setcounter{tocdepth}{1}
\addtocontents{toc}{\protect\setcounter{tocdepth}{-1}}

\section{Introduction}
\label{sec: intro}
Offline MARL is motivated by the need to learn coordinated multi-agent behaviors in safety-critical or cost-constrained domains where active data collection is infeasible or expensive, \textit{e.g.}, autonomous driving~\cite{shalev2016safe}, logistics~\cite{tamagawa2010evaluating}, and robotics~\cite{ota2006multi}. In this setting, the learning objective is twofold. Agents must learn to coordinate their behavior from heterogeneous and suboptimal demonstrations, while avoiding the distributional shift that arises when the learned policy visits state-action regions unsupported by the data. Early offline MARL methods~\cite{yang2021believe, pan2022plan, wang2023offline} rely on simple policy classes (\textit{i.e.}, Gaussian), which struggle to represent multi-modal and coordinated joint action distributions. When offline data contains a mixture of diverse coordination strategies, a unimodal policy often collapses onto a single mode and produces out-of-distribution joint actions that no agent in the data ever executed together.

To address this expressiveness gap, recent works adopt powerful generative models, \textit{including} diffusion models~\cite{zhu2024madiff, li2025dof} and flow matching  models~\cite{li2025om2p, lee2025multi}, as policy representations in offline MARL. Although these models improve the representational power of policy, their iterative sampling process incurs a high inference cost. MAC-Flow~\cite{lee2025multi} and OM$^2$P~\cite{li2025om2p} mitigate this via one-step distillation and mean-flow generation, respectively. However, improved expressiveness and faster inference do not resolve a more fundamental limitation. Since the outcome policy is optimized against a learned Q-function, it inherently tends to collapse diverse coordination modes into a single or blurred one. 

Methods that parameterize a policy and optimize it to maximize a learned Q-function must balance value improvement against staying close to the behavioral distribution. A fixed regularization coefficient controls this trade-off. If it is too large, the policy stays too close to suboptimal data; if it is too small, the policy exploits errors in the learned Q-function and produces OOD actions. Although this tension is well-known in single-agent offline RL~\cite{fujimoto2019off, kumar2020conservative}, the multi-agent setting introduces the following compounding difficulty. Training-time value maximization deforms each agent's policy away from the behavioral prior, and the regularization coefficient bounds how far each marginal may drift, but cannot bound the compound effect on the joint. Even when every marginal remains within its individual support, the joint policy can place mass on action combinations that no trajectory in the data ever visited (\textit{i.e.}, regions where the learned Q-function is least reliable). In fact, the data may support multiple distinct joint modes of comparably high estimated value, but committing each marginal to a single greedy mode at training time discards the others before the agent has any way to tell them apart. Once new interaction data arrives, the policy can no longer use it to discriminate which mode is truly higher-value, because the alternatives have already been suppressed.

A natural alternative is to decouple two objectives, motivated by test-time training~\cite{park2024value, xureinforcement, lee2026policy}. Instead of learning a policy that directly maximizes value, we keep the behavioral policy as a foundational prior and defer value-based refinement to test time. At deployment, we sample action candidates from this prior and then move them toward high-value regions using value gradients. The extent of this adjustment acts as adaptive control, replacing the need for a fixed regularization coefficient.

In this work, we ask a question not yet studied by prior works:
\begin{center}
    \textit{How do we avoid explicit policy parameterization for value maximization, and instead recover coordination at test time by directly transporting behavioral samples toward high-value regions?}
\end{center}
We answer this question with the proposed solution, \textbf{SCOUT (scalable coordination via optimal unified transport)}, an offline MARL framework that combines flow-matching behavioral priors with decomposed value gradient for test-time action refinement. Our key contributions are:
\begin{itemize}
    \item We propose a policy-free value maximization for offline MARL that separates behavioral modeling from value improvement, replacing training-time regularization tuning with a test-time refinement budget.
    \item We derive decomposed value gradient transport under the IGM principle, enabling fully decentralized test-time refinement that recovers inter-agent coordination with a bounded approximation gap.
    \item Empirically, test-time refinement strengthens coordination without retraining, achieving the best average performance in continuous and discrete domains. Notably, unlike existing flow-based MARL, our method yields improvements in all offline-to-online RL experiments.
\end{itemize}

\section{Related Works}
\textbf{Offline MARL.} Several works have explored the data-driven sequential decision-making problem, \textit{including} offline RL and MARL, which aims to extract a return-maximizing policy within the support of an offline dataset. Specifically, offline MARL requires learning coordinated policies for multiple agents. This is not trivial because it requires modeling dependencies among agents in a combinatorially large joint action space. The simplest baselines are multi-agent extensions of single-agent offline RL methods, either by introducing conservatism in the value function~\cite{fujimoto2019off, kumar2020conservative} or the actor~\cite{fujimoto2021minimalist}, or by performing Bellman updates without querying out-of-distribution actions~\cite{kostrikov2021offline}. To further promote coordination, more advanced works are often combined with additional multi-agent design choices, such as value decomposition~\cite{pan2022plan, liu2025offline}, policy optimization with a centralized critic~\cite{yang2021believe, wang2023offline}, or a centralized world model~\cite{barde2023model}. However, these methods often struggle to capture the multi-modal distributions present in multi-agent offline datasets because of their limited policy expressiveness. To address this issue, recent work has adopted more expressive generative policies, including an autoregressive model~\cite {meng2023offline}, diffusion models~\cite{zhu2024madiff}, and flow matching~\cite{lee2025multi}. However, these methods couple value learning with the generative process itself: they either anchor the policy via BC-like losses~\cite{wang2022diffusion, park2025flow, lee2025multi, lee2026towards}, which produces overly conservative behavior, or backpropagate value gradients through iterative sampling \cite{ren2024diffusion, mark2024policy}, which is unstable. In contrast, the proposed solution bypasses explicit policy parameterization altogether. Instead, Stein variational gradient~\cite{liu2016stein, liu2017stein, xureinforcement} fields refine flow-generated action candidates toward high-value regions at test time. This enables adaptive test-time scaling while retaining generative expressiveness.

\textbf{Test-time action refinement.} Our main idea in this work is based on test-time refinement (\textit{i.e.}, $a\leftarrow a+\nabla_a Q(s,a)$)~\cite{park2024value}, which refines actions from a base generative policy at inference time rather than during training. This paradigm has been explored along several directions in single-agent RL. Rejection sampling methods generate multiple action candidates and select the highest-valued one~\cite{hansen2023idql, li2025reinforcement}. Gradient-based approaches directly optimize sampled actions via Q-function gradients~\cite{mark2024policy, xureinforcement, li2026qlearningadjointmatching}. Residual policies instead train a separate network to modify a base policy in noise or action space~\cite{singh2020parrot, wagenmaker2025steering}. Complementary to these, optimal transport (OT) offers a principled framework for moving probability mass toward high-value regions~\cite{dabney2018distributional, dadashi2020primal}. More recently, particle-based methods adopt an OT viewpoint to transport samples toward value-preferred regions~\cite{asadulaev2024rethinking}. However, these approaches remain limited to single-agent settings. The proposed solution is motivated by this direction but specifically designed to solve the multi-agent coordination problem. This avoids explicit policy parameterization and naturally handles the combinatorial joint action space.

\section{Preliminaries}
\textbf{Problem setting.} We consider cooperative MARL problems modeled as a decentralized partially observable Markov decision process (Dec-POMDP)~\cite{bernstein2002complexity}, defined by the tuple $\mathcal{M}=\langle \mathcal{I}, \mathcal{S}, \{\mathcal{O}_i\}, \{\mathcal{A}_i\}, \mathcal{T}, \{\Omega_i\}, \{r_i\}, \gamma \rangle$, where $\mathcal{I}=\{1, 2, \ldots, I\}$ is a finite set of agents, $\mathcal{S}$ denotes the state space, $\mathcal{O}_i$ and $\mathcal{A}_i$ are the observation and action spaces of agent $i$, $\mathcal{T}(s' \mid s, \mathbf{a}): \mathcal{S} \times \mathcal{A}_1 \times \cdots \times \mathcal{A}_I \to \Delta(\mathcal{S})$ is state transition function over state $s$ and joint action $\mathbf{a} = [a_1, \ldots, a_I]$, $\Omega_i(o_i \mid s): \mathcal{S} \to \Delta(\mathcal{O}_i)$ is the observation function, $r_i(s, \mathbf{a}): \mathcal{S} \times \prod_{i} \mathcal{A}_i \to \mathbb{R}$ is an individual reward, and $\gamma \in [0,1)$ is the discount factor. In the offline setting, agents learn from a fixed dataset $\mathcal{D} = \{\tau^{(n)}\}_{n=1}^{N}$, where each trajectory $\tau^{(n)}$ is collected by unknown behavioral policies. The behavioral policy can be empirically defined as $\mu(\mathbf{a} \mid \mathbf{o}) \approx p_{\mathcal{D}}(\mathbf{a} \mid \mathbf{o})$ and its per-agent marginals as $\mu_i(a_i \mid o_i)$. We consider the centralized training and decentralized execution (CTDE)~\cite{yang2021believe, wang2023offline, lirevisiting, lee2025multi}.

Offline RL methods typically aim for Q-value maximization because optimizing cumulative return at a given MDP is infeasible without additional interaction~\cite{levine2020offline, kumar2020conservative}. To make this tractable in MARL, we decompose the global action-value function $Q_{\mathrm{tot}}(\mathbf{o}, \mathbf{a})$ into per-agent utilities $\{Q_i(o_i, a_i)\}_{i=1}^{I}$ under the individual-global-max (IGM) principle~\cite{son2019qtran, rashid2020monotonic} as follows: 
\begin{equation}
\arg\max_{\mathbf{a}}  Q_{\mathrm{tot}}(\mathbf{o}, \mathbf{a}) = \bigl(\arg\max_{a_1} Q_1(o_1, a_1), \ldots, \arg\max_{a_I} Q_I(o_I, a_I)\bigr).
\label{eq: igm}
\end{equation}
We adopt the additive decomposition throughout. Under this, each agent independently maximizes its own value, and the resulting joint action is optimal, \textit{i.e.}, $\max_{\Pi} \mathbb{E}_{\mathbf{o} \sim \mathcal{D}}[Q_{\mathrm{tot}}(\mathbf{o}, \Pi(\mathbf{o}))]= \sum_{i=1}^{I} \max_{\pi_i} \mathbb{E}_{o_i \sim \mathcal{D}}[Q_i(o_i, \pi_i(o_i))]$. However, direct maximization of a learned Q-function leads to OOD errors. Therefore, in practice, the per-agent policy extraction problem can be defined as follows:
\begin{equation}
    \pi_i^* = \arg\max_{\pi_i} \mathbb{E}_{o_i\sim\mathcal{D}, a_i \sim \pi_i(\cdot|o_i)}\bigl[Q_i(o_i, a_i)\bigr], \quad \text{s.t.} \quad \mathbb{E}_{o_i\sim\mathcal{D}}\Bigl[D\bigl(\pi_i(\cdot|o_i), \mu_i(\cdot|o_i)\bigr)\Bigr] \leq \epsilon,
    \label{eq: policy}
\end{equation}
where $D$ is a divergence measure, \textit{e.g.}, KL-divergence or $\ell_2$-norm, and $\epsilon$ is a threshold.

\textbf{Policy extraction.} The behavior-regularized objective function in~\eqref{eq: policy} can be optimized by two families of solutions in general~\cite{lee2026recipestableofflinemultiagent}, differentiated by what they focus mode-seeking and mode-covering.

For mode-seeking, a representative example is a parameterized gradient~\cite{fujimoto2021minimalist, tarasov2023revisiting}, which extracts a Gaussian policy by converting the constraint into a penalty, following \texttt{DDPG+BC} objective~\cite{pan2022plan, lee2026recipestableofflinemultiagent}:
\begin{equation}
    \max_{\pi_i} \mathbb{E}_{o_i \sim \mathcal{D}}\Bigl[\mathbb{E}_{a_i \sim \pi_i}\bigl[Q_i(o_i,a_i)\bigr] - \beta D(\pi_i(\cdot|o_i), \mu_i(\cdot|o_i))\Bigr].
    \label{eq: ddpgbc}
\end{equation}
An alternative for a mode-seeking family is to use distillation to compress the multi-step generative model into a one-step policy~\cite{ding2023consistency, park2025flow, lee2025multi}. However, these methods output Gaussian policies, which may be insufficient because they collapse into a single mode and lack expressivity. For expressive generative policies such as diffusion~\cite{ho2020denoising} or flow models~\cite{lipman2022flow}, the parameterized gradient requires backpropagation through multi-step iterations, which is expensive and numerically fragile~\cite{wang2022diffusion}.

A second family samples actions from $\mu$ and reweights them by values. A closed-form solution is $\pi^*(a|s) \propto \mu(a|s)\exp(Q(s,a)/\beta)$, approximated by \texttt{weighted BC}~\cite{peng2019advantage, wang2023offline, xu2023offline}:
$$\max_{\pi} \mathbb{E}_{o_i \sim \mathcal{D}, a_i \sim \mu_i}\!\bigl[\exp(Q_i(o_i,a_i)/\beta) \cdot \log \pi_i(a_i \mid o_i)\bigr].$$
While this has been viewed as overly conservative~\cite{wu2026invisibleleashrlvrescape}, it carries a distinct advantage for multi-agent coordination~\cite{lee2026recipestableofflinemultiagent}. A key implication is that preserving the coordinated strategies in the dataset is often more important than aggressive mode-seeking, as independent mode selection can break inter-agent compatibility. As a theoretically near-optimal solution for this problem, best-of-$N$ draws $N$ candidate actions from $\mu$ and returns the highest-valued one~\cite{ghasemipour2021emaq, beirami2024theoretical, yang2024asymptotics, xureinforcement} as follows:
\begin{equation}
    a^*_i = \arg\max_{a^k_i \sim \mu_i(\cdot|o_i), k \in [N]} Q_i(o_i, a^k_i).
    \label{eq: best-of-n}
\end{equation}
This requires no policy training in terms of value maximization beyond behavioral policy modeling.

\textbf{Flow-matching and behavioral policy modeling.} To model behavioral policy in a multi-agent system, generative modeling offers a flexible framework that can capture multi-modal distributions~\cite{zhu2024madiff, li2025dof, lee2025multi, li2025om2p, hukmagic}. Specifically, flow-matching can relax the tradeoff between coordination performance and computational cost in MARL. Flow-matching~ learns a time-dependent velocity field $v_{\phi}(t,x):[0,1]\times\mathbb{R}^d\rightarrow\mathbb{R}^d$ that transports noise $p_0=\mathcal{N}(0, I_d)$ to the data distribution $p_1$~\cite{lipman2022flow}. Given the conditional linear path $x_t=(1-t)x_0 + tx_1$, the training objective is as follows:
\begin{equation}
\mathcal{L}_{\mathrm{FlowMatching}}(\phi) = \mathbb{E}_{x_0 \sim p_0, x_1 \sim p_1, t \sim \mathrm{Unif}([0,1])}\!\bigl[\lVert v_\phi(t, x_t) - (x_1 - x_0)\rVert_2^2\bigr]. \nonumber
\end{equation}
At inference, a sample is produced by integrating the ODE from initial noise~\cite{papamakarios2021normalizing, wildberger2023flow, lipman2024flow}.

Under multi-agent learning, different structural assumptions on the velocity field yield a hierarchy of policy representations~\cite{li2025om2p, lee2025multi}. A joint flow policy $\mu_\phi(\mathbf{o}, \mathbf{z})$ conditions the velocity field on the full joint observation $\mathbf{o}$ and joint noise $\mathbf{z} \sim \prod_i \mathcal{N}(0, I_{d_i})$, capturing inter-agent dependencies, as follows:
\begin{equation}
    \mathcal{L}_{\mathrm{JointFlow}}(\phi) = \mathbb{E}_{\mathbf{z} \sim p_0, ~(\mathbf{o}, \mathbf{a}) \sim \mathcal{D}, ~t\sim\text{Unif}([0,1])}\bigl[\lVert v_\phi(t, \mathbf{o}, x_t) - (\mathbf{a} - \mathbf{z})\rVert_2^2\bigr].
    \label{eq: joint-flow}
\end{equation}
While expressive, the joint policy violates CTDE at execution time. Instead, a factorized model trains individual policy $\hat{p}(\mathbf{a} \mid \mathbf{o}) = \prod_i \hat{p}_i(a_i \mid o_i)$~\cite{li2025dof}. This is fully decentralized but discards coordination patterns. A projected model~\cite{lee2025multi, hukmagic} bridges the two by learning a joint flow and then distilling it into the factorized one, $\hat{p}_{\mathrm{proj}} = \arg\min_{\hat{p} \in \mathcal{P}} \mathcal{L}(\hat{p} \| p)$, retaining as much coordination as the factorized structure permits. In practice, this can be written as follows:
\begin{equation}
    \mathcal{L}_{\text{ProjectedFlow}}(\mathbf{w}) = \mathbb{E}_{\mathbf{o}\sim \mathcal{D}, ~ \mathbf{z} \sim \mathbf{p}_0 } \Bigg[\sum_{i=1}^I \left|\left|\mu_{w_i}(o_i, z_i) - [\mu_\phi(\mathbf{o}, \mathbf{z})]_i \right|\right|^2_2\Bigg].
\end{equation}

\section{Scalable Coordination via Optimal Unified Transport (\texttt{SCOUT})}
\label{sec: scout}
As motivated in Section~\ref{sec: intro}, our goal is to develop an offline MARL algorithm that can extract Q-maximizing multi-agent policies without explicit policy parameterization for value maximization. Following prior works~\cite{xureinforcement, eysenbach2021maximum, bloem2014infinite}, we consider the MaxEnt surrogate of~\eqref{eq: policy}, $\mathbb{E}_{a \sim \pi}[Q(s,a)] + \alpha \mathcal{H}(\pi(\cdot|s))$, whose optimal policy takes the Boltzmann form $\pi^*(a|s) \propto \exp(Q(s,a)/\alpha)$.

Rather than fitting an explicit actor to approximate $\pi^*$, we construct a transport map that moves samples from $\mu$ toward $\pi^*$ via variational gradient descent, using $\nabla_a Q$ as the driving force. Specifically, given a set of action candidates sampled from a flow-matching behavioral policy, we iteratively refine them at test time via the simplest possible update as $a \leftarrow a + \eta \nabla_a Q(o, a)$. The resulting refined actions constitute the implicit policy output, with the number of gradient steps $L$ serving as a tunable knob that controls coordination strength at deployment without retraining.

\textbf{Why do we need an implicit policy?} This direction offers several appealing properties. First, because the initial actions are drawn from the behavioral policy, the implicit policy refines the behavioral prior rather than replacing it. Then, the transport budget provides an adaptive test-time scaling. When $L=0$, it recovers the best sampling from the behavioral policy, while increasing $L$ progressively pushes actions toward Q-maximizing regions. Furthermore, the gradient-based transport can discover actions beyond the reference distribution when the Q-function generalizes well.

\subsection{Why is this hard in multi-agent settings?}
The above idea and problem hold straightforwardly in the single-agent setting~\cite{xureinforcement, wu2026invisibleleashrlvrescape}, where the transport operates on a single action space $\mathcal{A}$ guided by a single Q-function. However, extending this to multi-agent settings is non-trivial. Under the CTDE paradigm, each agent $i$ can only access its local observation $o_i$ at test time. This means the action refinement must run independently for each agent, using only its local Q-function $Q_i(o_i, a_i)$ and its own action particles. The refined joint action $(a_1^*, \ldots, a_I^*)$ must be globally coherent. Namely, each agent's refined action should be coordinated with those of other agents. This tension between decentralized execution and coordinated behavior has been a long-standing challenge in MARL~\cite{son2019qtran, rashid2020monotonic}.

The most common solution is the IGM principle~\cite{son2019qtran} in~\eqref{eq: igm}, which ensures independent per-agent maximization to align with joint maximization. However, several practical questions remain. First, IGM guarantees alignment at the greedy argmax, but the gradient-based transport involves iterative updates that modify the particle distribution across multiple steps. It is unclear whether the per-agent transport dynamics remain aligned with the joint optimum throughout this process, especially when the critic must evaluate actions that are progressively shifted away from the behavioral support during training. Second, the individual behavioral prior might discard the inter-agent correlations present in the offline data. Even if the transport dynamics are correctly factorized, the initial particles may lack the coordination patterns necessary for effective joint behavior.

\subsection{Proposed solution}
In this subsection, we now present the full \texttt{SCOUT} framework for multi-agent test-time coordination.

\textbf{Flow-matching behavioral prior.} We learn a per-agent velocity field via flow matching~\cite{lipman2022flow}. 
\begin{equation}
    \mathcal{L}_\text{Flow-BC}(w_i) = \mathbb{E}_{x_i^0\sim p_0, ~ (o_i,a_i) \sim \mathcal{D}, ~ t\sim \text{Unif}([0,1])} \left[ \| v_{w_i}(t, o_i, x_i^t) - (a_i - x^0_i) \|_2^2 \right].
\end{equation}
This is a generative model of the behavioral policy $\mu_i(\cdot|o_i)$ that serves as the source distribution for the transport. At inference time, action samples are produced by Euler integration over $K$ steps.

\textbf{Decomposed value function.} The value function has to satisfy that it must decompose under IGM so that per-agent gradient transport is globally consistent. We achieve this by training $\{Q_{\theta_i}\}_{i=1}^I$ whose TD targets are computed through an additive decomposition as $Q_{\mathrm{tot}}(\mathbf{o}, \mathbf{a}) = \sum_{i=1}^{I} Q_{\theta_i}(o_i, a_i)$. This is the simplest decomposition satisfying IGM. The critic is trained via TD learning with targets $y = r + \gamma Q_{\mathrm{tot}}(\mathbf{o}', \mathbf{a}')$, where the next-state actions $a_i'$ are obtained by policy. This global credit signal is what enables per-agent gradient transport to produce coordinated behavior at test time. 

\textbf{Action refinement via decomposed gradient transport.} 
Given the pre-trained behavioral prior and the decomposed value function, we now introduce the test-time refinement as the core of \texttt{SCOUT}. Each agent $i$ begins by drawing $P$ action particles from its behavioral flow policy, $a_{i,j}^{(0)} = \mu_{w_i}(o_i, z_{i,j})$ with $z_{i,j} \sim \mathcal{N}(0, I)$ for $j = \{1, \ldots, P\}$. These particles are then iteratively refined for $L$ steps, motivated by Stein variational gradient descent (SVGD) as transport in Stein geometry~\cite{liu2016stein, liu2017stein} as follows:
\begin{equation}
    \hat{\phi}_i^*(a) = \frac{1}{P}\sum_{j=1}^{P}\left[\frac{1}{\alpha}k(a_{i,j}^{(\ell)}, a)\nabla_{a_{i,j}^{(\ell)}} Q_i(o_i, a_{i,j}^{(\ell)}) + \nabla_{a_{i,j}^{(\ell)}} k(a_{i,j}^{(\ell)}, a)\right]~~\text{s.t.}~~a_{i,j}^{(\ell+1)} = a_{i,j}^{(\ell)} + \epsilon\hat{\phi}_i^{\mathrm{clip}}(a_{i,j}^{(\ell)}),
    \label{eq: svgd}
\end{equation}
where $k(x,y) = \exp(-\gamma\|x - y\|^2)$ is an RBF kernel with adaptive median bandwidth. The first term in~\eqref{eq: svgd} is a kernel-smoothed value gradient that drives particles toward high-$Q_i$ regions; the second term is a repulsive force that prevents particle collapse and preserves multi-modality. 

For discrete action spaces, the refinement operates in the logit space. The flow policy produces logit vectors $\ell_{i,j} \in \mathbb{R}^{|\mathcal{A}_i|}$, and the value gradient is computed as $\nabla_{\ell}\sum_a \mathrm{softmax}(\ell)_a \cdot Q_i(o_i, a)$ via automatic differentiation through the softmax-weighted Q-value.

\subsection{Theoretical analysis}
We now provide theoretical guarantees for the \texttt{SCOUT} framework. 

\textbf{Soft value gap under approximate IGM.} The learned decomposition is at best an approximation of the true joint value, so we adopt the following residual assumption.
\begin{assumption}
\label{ass: decompose}
    The global Q-function decomposes as $Q_{\mathrm{tot}}(\mathbf{o}, \mathbf{a}) = \sum_{i=1}^{I} Q_i(o_i, a_i) + \Delta(\mathbf{o}, \mathbf{a})$, where the residual interaction term satisfies $\|\Delta\|_\infty \leq \delta$.
\end{assumption}
When $\delta=0$, exact additive IGM holds and decentralized transport recovers the global Boltzmann optimum. The following result quantifies the soft-value gap when $\delta>0$.
\begin{theorem}[Soft value gap under approximate IGM]
\label{theorem: igm}
Let $\tilde{\pi}^L = \prod_i \tilde{\pi}_i^L$ be the joint policy after $L$ steps of independent per-agent transport, each targeting $\pi_i^* \propto \exp(Q_i/\alpha)$. Let $\pi_{\mathrm{joint}}^* \propto \exp(Q_{\mathrm{tot}}/\alpha)$ be the true joint Boltzmann target and $\mathcal{J}_{\mathrm{soft}}(\pi):=\mathbb{E}_\pi[Q_{\mathrm{tot}}]+\alpha\mathcal{H}(\pi)$. Then
\begin{equation}
0 \leq \mathcal{J}_{\mathrm{soft}}(\pi_{\mathrm{joint}}^*) - \mathcal{J}_{\mathrm{soft}}(\tilde{\pi}^L) \leq \alpha\underbrace{\mathrm{KL}\!\left(\tilde{\pi}^L \Big\| \textstyle\prod_i \pi_i^*\right)}_{\mathrm{transport~error}} + \underbrace{4\delta}_{\mathrm{IGM~violation}}.
\end{equation}
\end{theorem}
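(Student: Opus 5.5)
The plan is to reduce everything to the Gibbs variational identity: for any density $\pi$ and any bounded $f$,
\[
\mathbb{E}_\pi[f]+\alpha\mathcal{H}(\pi)=\alpha\log Z_f-\alpha\,\mathrm{KL}(\pi\,\|\,\pi_f),
\]
where $\pi_f\propto\exp(f/\alpha)$ and $Z_f=\int\exp(f/\alpha)$. We apply this once with $f=Q_{\mathrm{tot}}$ and once with the additive surrogate $f=\sum_i Q_i$, whose Gibbs measure factorizes exactly as $\prod_i\pi_i^*$.

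\textbf{Lower bound.} Taking $f=Q_{\mathrm{tot}}$ gives $\mathcal{J}_{\mathrm{soft}}(\pi)=\alpha\log Z_{\mathrm{tot}}-\alpha\,\mathrm{KL}(\pi\|\pi^*_{\mathrm{joint}})$. This is maximized at $\pi^*_{\mathrm{joint}}$, so the gap is $\alpha\,\mathrm{KL}(\tilde\pi^L\|\pi^*_{\mathrm{joint}})\ge 0$. Note that we do not need $\tilde\pi^L$ to be of product form for this step.

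\textbf{Upper bound.} Define the surrogate objective $\mathcal{J}_0(\pi)=\mathbb{E}_\pi[\sum_iQ_i]+\alpha\mathcal{H}(\pi)$. By Assumption~\ref{ass: decompose}, $|\mathcal{J}_{\mathrm{soft}}(\pi)-\mathcal{J}_0(\pi)|\le\delta$ for every $\pi$. The gap then splits telescopically:
\[
\big[\mathcal{J}_{\mathrm{soft}}(\pi^*_{\mathrm{joint}})-\mathcal{J}_0(\pi^*_{\mathrm{joint}})\big]+\big[\mathcal{J}_0(\pi^*_{\mathrm{joint}})-\mathcal{J}_0(\tilde\pi^L)\big]+\big[\mathcal{J}_0(\tilde\pi^L)-\mathcal{J}_{\mathrm{soft}}(\tilde\pi^L)\big].
\]
We bound the three brackets separately.

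\begin{itemize}
\item The first and third brackets are each at most $\delta$.
\item For the middle bracket, the identity with $f=\sum_iQ_i$ gives $\mathcal{J}_0(\pi)=\alpha\log Z_0-\alpha\,\mathrm{KL}(\pi\|\prod_i\pi_i^*)$, using that the normalizer factorizes. Hence the middle bracket equals $\alpha\,\mathrm{KL}(\tilde\pi^L\|\prod_i\pi_i^*)-\alpha\,\mathrm{KL}(\pi^*_{\mathrm{joint}}\|\prod_i\pi_i^*)\le\alpha\,\mathrm{KL}(\tilde\pi^L\|\prod_i\pi_i^*)$, since we can drop the nonpositive term.
\end{itemize}

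This already yields the bound with $2\delta$ in place of $4\delta$, so the stated $4\delta$ follows a fortiori. The slack plausibly arises in the authors' route from bounding $|\alpha\log Z_{\mathrm{tot}}-\alpha\log Z_0|\le\delta$ and $\alpha\,\mathrm{KL}(\pi^*_{\mathrm{joint}}\|\prod\pi_i^*)\le2\delta$ separately, via the density ratio $\exp((\Delta-c)/\alpha)$. If I instead follow that route, I would control the log-ratio of the two Gibbs densities by $2\delta/\alpha$ uniformly.

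\textbf{Main obstacle.} The hardest part is technical: ensuring that the entropies and KLs are finite and the normalizers exist. I would assume the action spaces are compact, or that the $Q_i$ make $\exp(Q_i/\alpha)$ integrable, and that $\tilde\pi^L$ has a density; the latter holds because SVGD with a small step is a diffeomorphic pushforward of the flow prior. If $\mathrm{KL}(\tilde\pi^L\|\prod\pi_i^*)=\infty$, the bound is trivial. The product structure of $\tilde\pi^L$ is not actually needed for the argument; it only makes the KL term decompose as $\sum_i\mathrm{KL}(\tilde\pi_i^L\|\pi_i^*)$, which ties it to per-agent transport convergence.
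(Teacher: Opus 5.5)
Your proof is correct, and it gives a stronger bound than the statement: $2\delta$ in place of $4\delta$. Your lower-bound argument is the same as the paper's.

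The upper-bound route is different. The paper inserts $\pi^*_{\mathrm{fact}}=\prod_i\pi_i^*$ as an intermediate point \emph{inside} $\mathcal{J}_{\mathrm{soft}}$ and adds two separately proved inequalities:
\begin{itemize}
\item $\mathcal{J}_{\mathrm{soft}}(\pi^*_{\mathrm{fact}})-\mathcal{J}_{\mathrm{soft}}(\tilde\pi^L)\le\alpha\,\mathrm{KL}(\tilde\pi^L\|\pi^*_{\mathrm{fact}})+2\delta$, from the variational identity for $\mathcal{J}_{\mathrm{add}}$ (your $\mathcal{J}_0$);
\item $\mathcal{J}_{\mathrm{soft}}(\pi^*_{\mathrm{joint}})-\mathcal{J}_{\mathrm{soft}}(\pi^*_{\mathrm{fact}})\le 2\delta$, from the $\mathcal{J}_{\mathrm{add}}$-optimality of $\pi^*_{\mathrm{fact}}$.
\end{itemize}
Each inequality swaps $\mathcal{J}_{\mathrm{soft}}$ for $\mathcal{J}_{\mathrm{add}}$ at both of its endpoints. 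The swap at $\pi^*_{\mathrm{fact}}$ is therefore charged twice, once as $+\mathbb{E}_{\pi^*_{\mathrm{fact}}}[\Delta]\le\delta$ and once as $-\mathbb{E}_{\pi^*_{\mathrm{fact}}}[\Delta]\le\delta$, even though these two terms cancel exactly. That is the real source of the extra $2\delta$. It does not come from separate bounds on $\alpha\log(Z_{\mathrm{tot}}/Z_0)$ and $\alpha\,\mathrm{KL}(\pi^*_{\mathrm{joint}}\|\prod_i\pi_i^*)$, as you guessed.

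Your telescoping swaps objectives only at the two genuine endpoints, $\pi^*_{\mathrm{joint}}$ and $\tilde\pi^L$. It uses $\pi^*_{\mathrm{fact}}$ only through its optimality for the surrogate, which is the step where you drop $-\alpha\,\mathrm{KL}(\pi^*_{\mathrm{joint}}\|\prod_i\pi_i^*)$. So you pay $\delta$ only twice.

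What each route buys: the paper's split gives an interpretable story, a transport error plus a factorized-versus-joint optimality gap, and the main text uses it to read $4\delta$ as $2\delta+2\delta$. Your route shows that reading double-counts.

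Your constant $2$ also cannot be improved in general. Take $Q_i\equiv 0$ on $[0,1]$, $\tilde\pi^L=\pi^*_{\mathrm{fact}}$ (so the KL term vanishes), and $\Delta=+\delta$ on a set of measure $p$ and $-\delta$ elsewhere. Then the gap is at least $2\delta(1-p)+\alpha\log p$, which exceeds $c\delta$ for any fixed $c<2$ once $p$ is small and then $\delta/\alpha$ is large.

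Finally, the regularity caveats you raise (integrability of $\exp(Q_i/\alpha)$, existence of a density for $\tilde\pi^L$) are left implicit in the paper's proof as well.
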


The transport error is a single $\mathrm{KL}$ term that vanishes as $L\to\infty$ under idealized population SVGD with exact Q-functions and 
suitable step sizes. The SVGD performs gradient descent on $\mathrm{KL}(\cdot \| \prod_i\pi_i^*)$ in the kernel-induced Stein geometry~\cite{liu2016stein, liu2017stein}. The $4\delta$ term decomposes as $2\delta$ from the per-agent transport target $\pi_i^*$ and $2\delta$ from the gap between the factorized and joint Boltzmann optima; both arise from the IGM decomposition itself and apply equally to all decomposed-Q methods, including \texttt{MAC-Flow}'s distillation~\cite{lee2025multi}. What distinguishes \texttt{SCOUT} is that the transport error can be driven down by increasing $L$ at test time without retraining, whereas distillation-based methods fix their approximation quality at training time.

\textbf{Decomposed transport approximates joint best-of-$P$.} Even though each agent's gradient update is entirely local, we show that the combination of decomposed transport and per-agent selection tracks the joint best-of-$P$ outcome up to the irreducible IGM gap.

\begin{proposition}[Decomposed transport under IGM]
\label{prop: transport}
Consider $P$ joint candidates with independently initialized particles $(a_{1,j}^{(0)},\ldots,a_{I,j}^{(0)}) \sim \prod_i \mu_i$. After $L$ steps of per-agent gradient transport: (\textit{i}) when $\Delta = 0$, per-agent selection $a_i^* = \arg\max_j Q_i(o_i, a_{i,j}^{(L)})$ yields $\mathrm{Cov}(a_i^*, a_{i'}^*) = 0$ and is jointly optimal; (\textit{ii}) when $\|\Delta\|_\infty\leq\delta$, per-agent selection approximates joint best-of-$P$ within the irreducible IGM gap as $Q_{\mathrm{tot}}(a_1^*,\ldots,a_I^*) \geq \max_{j\in[P]} Q_{\mathrm{tot}}\bigl(a_{1,j}^{(L)},\ldots,a_{I,j}^{(L)}\bigr) - 2\delta.$
\end{proposition}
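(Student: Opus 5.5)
The proof rests on a single structural fact: under the per-agent transport, agent $i$'s particles evolve using only $o_i$, its own initial particles, and $Q_i$. For part (\textit{i}), I would first formalize this by writing the $L$-step update \eqref{eq: svgd} as a deterministic map $T_i^{(L)}$ applied to agent $i$'s particle set $\{a_{i,j}^{(0)}\}_{j=1}^P$. The kernel and bandwidth are computed per agent, so the map does not depend on any other agent's particles.

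Since the initial particles of different agents are independent (draws from $\prod_i \mu_i$ with independent noises $z_{i,j}$), the refined sets $\{a_{i,j}^{(L)}\}_j$ are independent across $i$. The selection $a_i^* = \arg\max_j Q_i(o_i, a_{i,j}^{(L)})$ is a measurable function of agent $i$'s set alone, with a fixed tie-breaking rule. Hence the $a_i^*$ are mutually independent, and $\mathrm{Cov}(a_i^*, a_{i'}^*) = 0$ for $i \neq i'$.

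For joint optimality when $\Delta = 0$, I would consider the product candidate set $\mathcal{C} = \prod_i \{a_{i,j}^{(L)}\}_j$. With $Q_{\mathrm{tot}} = \sum_i Q_i$, the maximum of $Q_{\mathrm{tot}}$ over $\mathcal{C}$ separates into a sum of per-agent maxima, which is exactly IGM \eqref{eq: igm} restricted to a finite set. Therefore $(a_1^*, \ldots, a_I^*)$ maximizes $Q_{\mathrm{tot}}$ over $\mathcal{C}$. The $P$ aligned joint candidates $(a_{1,j}^{(L)}, \ldots, a_{I,j}^{(L)})$ lie in $\mathcal{C}$, so in particular the selection dominates the joint best-of-$P$.

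For part (\textit{ii}), I would use a sandwich argument under Assumption~\ref{ass: decompose}. Let $j^\star$ be the index attaining the joint best-of-$P$. Then
\begin{align*}
Q_{\mathrm{tot}}(a^*) &\ge \textstyle\sum_i Q_i(o_i,a_i^*) - \delta \ge \sum_i Q_i(o_i,a_{i,j^\star}^{(L)}) - \delta\\
&\ge Q_{\mathrm{tot}}(a_{1,j^\star}^{(L)},\ldots,a_{I,j^\star}^{(L)}) - 2\delta .
\end{align*}
The middle inequality holds because each $a_i^*$ is the per-agent maximizer. Each of the outer inequalities uses $|\Delta| \le \delta$ once, which accounts for the two factors of $\delta$.

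The main obstacle is being careful in (\textit{i}), because SVGD couples particles within an agent through the kernel. Independence across agents therefore has to be argued at the level of whole particle sets, not individual particles. I must also confirm that the adaptive median bandwidth is computed per agent rather than jointly. Finally, the covariance claim should be read as a cross-agent statement for $i \neq i'$, which is the intended meaning.
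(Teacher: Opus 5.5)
Your proposal is correct and follows the paper's proof essentially step for step. Part (\textit{i}) uses independence of the whole per-agent particle sets, since each agent's update uses only its own particles and $Q_i$, to get zero cross-covariance, and then reduces the $\Delta=0$ case to the IGM identity; part (\textit{ii}) applies $\|\Delta\|_\infty\le\delta$ on both sides of the per-agent optimality inequality. Your product-set formulation $\mathcal{C}=\prod_i\{a_{i,j}^{(L)}\}_j$ makes the paper's brief appeal to IGM more precise, showing the per-agent selection dominates all $P^I$ recombinations and not only the $P$ aligned candidates.
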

Herein, (\textit{i}) follows since each agent's particles and selection depend only on its own $Q_i$ and initial noise. Then, (\textit{ii}) follows from $|Q_{\mathrm{tot}} - \sum_i Q_i| \leq \delta$ uniformly; full proofs are in Appendix~\ref{app:proof}.

\textbf{Implicit coordination via TD targets.} Proposition~\ref{prop: transport} shows that decomposed transport is safe, \textit{i.e.}, it does not lose more than the irreducible $2\delta$ relative to a hypothetical centralized refiner. The complementary question is whether the per-agent gradient $\nabla_{a_i} Q_i$ itself encodes inter-agent dependencies. Recall the TD target $y = r + \gamma\sum_j Q_j(o'_j, a'_j)$. At the TD fixed point, $Q_i$ inherits sensitivity to other agents' actions through the joint distribution of $a'_j$ used to form the target. Concretely, fitted $Q_i$ can be interpreted as approximating $\nabla_{a_i} Q_i(o_i, a_i) \approx \mathbb{E}_{a_{-i}\sim\rho}[\nabla_{a_i} Q_{\mathrm{tot}}(\mathbf{o},\mathbf{a})]$ along the training distribution $\rho$. Under additive decomposition, $\partial Q_{\mathrm{tot}}/\partial Q_i = 1$, so $\nabla_{a_i} Q_i$ preserves the sign of the joint gradient $\nabla_{a_i} Q_{\mathrm{tot}}$, and per-agent transport along $\nabla_{a_i} Q_i$ is consistent with joint Q-improvement. Coordination is thus inherited from training, not synthesized at test time, and decomposed transport amplifies it through repeated kernel-smoothed updates over the particle set.

\input{table/pseducode}

\subsection{Algorithm summary}

We summarize the \texttt{SCOUT} framework in Pseudocode~\ref{alg:scout}. Training alternates between two decoupled objectives: (\textit{i}) the flow-matching loss  $\mathcal{L}_{\mathrm{BC}}$ fits the individual behavioral prior with ODE in Pseudocode~\ref{alg:flow_ode}, and the TD loss $\mathcal{L}_{\mathrm{critic}}$ learns decomposed Q functions whose targets are computed through the additive decomposition $Q_{\mathrm{tot}} = \sum_i Q_i$ using actions refined by $L_{\mathrm{train}}$ transport steps in Pseudocode~\ref{alg:vgf}.

At test time, each agent independently generates $P$ behavioral candidates via ODE integration, refines them for $L_{\mathrm{test}}$ steps of decomposed value gradient flow, and selects the highest-$Q_i$ particle. The transport budget $(L_{\mathrm{test}}, \epsilon)$ need not match training. In particular, $L_{\mathrm{test}} = 0$ recovers best-of-$P$ sampling, $L_{\mathrm{test}} = L_{\mathrm{train}}$ matches training conditions, and $L_{\mathrm{test}} > L_{\mathrm{train}}$ enables additional test-time scaling when the Q-function generalizes beyond the data support.

\subsection{Didactic example: Why do we need test-time value guidance, not training time?}

To illustrate the trade-off between training-time and test-time value incorporation, we design a two-agent coordination game (Figure~\ref{fig: toy} (\textit{a})). Each agent selects a scalar action $a_i \in [-2, 2]$, and the joint reward has two anti-correlated Gaussian modes: an optimal mode at $(-1, 1)$ with reward $4$ and a sub-optimal mode at $(1, -1)$ with reward $2$. The offline dataset consists of 400 transitions, with only $10\%$ near each peak. Both methods share the same VDN critic $Q_{\mathrm{tot}} = Q_1(a_1) + Q_2(a_2)$.

The key difference is how each method incorporates the Q-function. MAC-Flow jointly optimizes distillation and Q-maximization, forcing the projected one-step policy to balance behavioral cloning and value improvement during training. In this toy example, this training-time compromise anchors each agent's output to the broad data distribution, while the decomposed Q-gradient alone cannot represent the anti-correlated structure, causing the policy to collapse near the origin (Figure~\ref{fig: toy} (\textit{c})). In contrast, SCOUT keeps the behavioral flow unchanged and applies value guidance only at test time. This preserves the multi-modal prior, and five refinement steps are sufficient to move candidates toward the optimal mode (Figure~\ref{fig: toy} (\textit{b})). In this setting, the selected per-agent actions align with the same anti-correlated mode because the additive utilities are trained from coordinated demonstrations.

\input{figure/toy}

\section{Experiments}
\label{sec: exp}
The subsection presents the experimental setup and answers the following research questions. 

{\textbf{{RQ1}. How good is \texttt{SCOUT} over continuous and discrete control domains of MARL testbeds?}

{\textbf{{RQ2}. How does \texttt{SCOUT} improve with additional interaction data?}

{\textbf{{RQ3}. How sensitive is \texttt{SCOUT} to the refinement step parameter $L_\text{train}$ and $L_\text{test}$?}

\input{table/smac_}

\subsection{Experimental setup}
We evaluate \texttt{SCOUT} on two cooperative MARL testbeds with discrete and continuous action spaces, spanning $7$ scenarios and $24$ offline datasets. We first introduce the selected testbeds and baselines.

\textbf{SMACv1}. Agents control a team of units in StarCraft II micromanagement tasks against a built-in AI opponent. We test on four maps, \textit{including} \texttt{3m}, \texttt{8m}, \texttt{2s3z}, and \texttt{5m\_vs\_6m}. Each map is paired with three offline dataset quality levels (\texttt{Good}, \texttt{Medium}, \texttt{Poor}) from the off-the-grid benchmark~\cite{formanek2023off}, resulting in $12$ configurations. The quality levels differ in the proportion of near-optimal trajectories, with Poor containing mostly suboptimal behavior.

\textbf{MA-MuJoCo.} In this testbed, a single robot is factorized into multiple agents, with each agent responsible for a specific subset of the robot's joints. This setup requires agents to learn coordinated behaviors to accomplish shared tasks. We source the corresponding offline datasets from~\cite{wang2023offline}.

\textbf{Baseline.} We organize baselines into three families by their policy type. For the Gaussian policy, \texttt{BC}, \texttt{MATD3BC}~\cite{fujimoto2021minimalist} , \texttt{MABCQ}~\cite{fujimoto2019off}, \texttt{MACQL}~\cite{kumar2020conservative}, and \texttt{ICQ}~\cite{yang2021believe} extend single-agent methods to the multi-agent setting. For the diffusion policy, \texttt{MADiff}~\cite{zhu2024madiff} models the joint policy as a diffusion process over the full action space, but we follow a distributed version (\texttt{MADiff-D}) that masks other agents' information. \texttt{DoF}~\cite{li2025dof} factorizes the diffusion into per-agent components to satisfy decentralized execution. For the flow policy, \texttt{Flow BC} trains a per-agent flow-matching behavioral prior, and \texttt{MAC-Flow}~\cite{lee2025multi} learns a joint flow and distills it into one-step projected policies with Q-maximization.

\input{table/mamujoco}

\subsection{Experimental results}
\textbf{RQ1 (Offline RL).} Tables~\ref{tab:disc_perf} and~\ref{tab: cont_perf} report offline RL performance on SMACv1 and MA-MuJoCo. \texttt{SCOUT} achieves the highest average score on both benchmarks, outperforming all baselines across policy classes. The gain is consistent, \textit{i.e.}, \texttt{SCOUT} matches or exceeds the best baseline on the majority of individual configurations. Notably, the advantage over \texttt{MAC-Flow}, which shares the same flow-matching backbone, isolates the benefit of test-time value transport over training-time distillation.

\input{figure/o2o}
\textbf{RQ2 (Offline-to-online RL).}
Figure~\ref{fig:o2o} evaluates how well each method transitions from offline pre-training to online fine-tuning. \texttt{SCOUT} outperforms \texttt{MAC-Flow} across all four scenarios. After the online transition, \texttt{SCOUT} continues to improve steadily. Because the behavioral prior is never deformed by a training-time value loss, it provides a stable foundation that online fine-tuning can build upon rather than correct. The advantage is most pronounced on Poor-quality datasets. Here, \texttt{MAC-Flow}'s offline policy is anchored to low-quality demonstrations by its distillation objective, and online fine-tuning struggles to overcome this initialization. \texttt{SCOUT}, by contrast, treats the offline prior as a sampling distribution rather than a fixed policy, so the online phase simply adjusts the transport budget to exploit newly available data without inheriting the limitations of the offline dataset.

\input{figure/sweep}
\textbf{RQ3 (Hyperparameter sensitivity).} Figure~\ref{fig:sweep} varies the refinement steps for training and test across four SMACv1-Medium scenarios. First, applying even a small number of refinement steps at test time improves over the no-refinement baseline ($L_\text{test}=0$), confirming that decomposed value gradient transport adds meaningful coordination beyond best-of-$P$ selection alone. Second, the optimal $L_\text{test}$ varies across scenarios. This is expected because each scenario and dataset quality presents a different Q-function landscape; the number of steps needed to reach high-value regions depends on how far the behavioral prior is from those regions and how smooth the Q-surface is. In practice, $L_\text{test}$ can be selected via a small validation sweep at deployment time without retraining, which is precisely the adaptivity that training-time methods cannot offer.

\section{Conclusion}
In this paper, we propose \texttt{SCOUT}, the first test-time policy refinement framework for offline MARL that decouples behavioral modeling from value maximization. While prior generative policies often compromise expressiveness by integrating value guidance during training, \texttt{SCOUT} preserves the multi-modal behavioral prior and defers value-driven refinement strictly to test-time inference via decomposed Stein variational gradient transport. We provide a single-term KL bound on the joint soft-value gap under approximate IGM and constructively explain how additive TD targets transfer inter-agent dependencies into per-agent gradients.

\textbf{Limitation.} While decoupling the behavioral prior from value maximization is effective across a range of tasks, the refinement hyperparameters exhibit some task- and dataset-dependent sensitivity. Therefore, a principled adaptive stopping criterion remains an open direction for future work.

\section*{Acknowledgements}
This work was supported by the National Science Foundation Graduate Research Fellowship Program under Grant No. NSF 2340651, NSF 2402650, and ARO W911NF-24-1-0193. This research used the Texas Advanced Computing Center (TACC) at The University of Texas at Austin for providing computational resources.
  
{
\small
\bibliographystyle{unsrt}
\bibliography{main}
}

\newpage
\appendix
\input{appendix/appendix}

\newpage
\input{appendix/checklist}

\end{document}

%% file: figure/overview.tex
\begin{figure}[h]
    \vspace{-0.3cm}
    \centering
    \includegraphics[width=\linewidth]{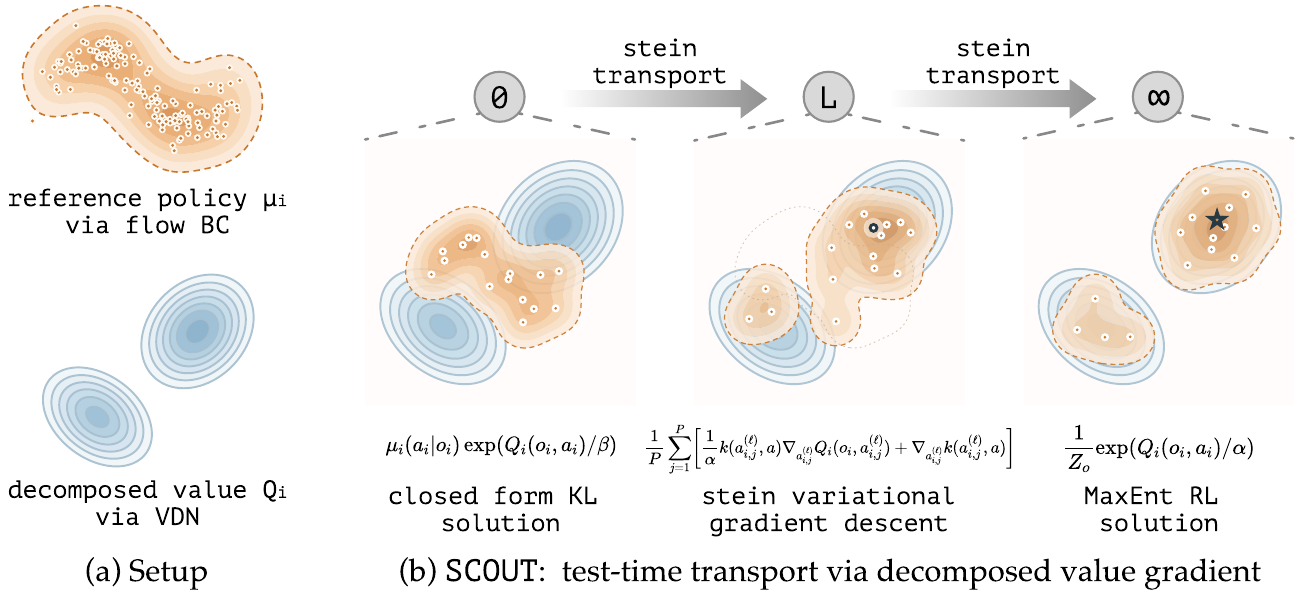}
    \vspace{-0.5cm}
    \caption{\textbf{Overview of the proposed solution.} (\textit{a}) The prerequisite for \texttt{SCOUT} is to train the reference policy via flow matching and decomposed value via a value decomposition network (VDN). (\textit{b}) \texttt{SCOUT} transports the behavior distribution at test time via Stein variational gradient descent.}
    \label{fig:overview}
\end{figure}

%% file: table/pseducode.tex
\begin{figure}[t]
\begin{minipage}[t]{0.52\textwidth}
\begin{algorithm}[H]
\caption{\texttt{SCOUT}'s training \& evaluation}
\label{alg:scout}
\begin{algorithmic}
    \fontsize{8pt}{10pt}\selectfont
    \State \textbf{Initialize} $v_w$, $\{Q_{\theta_i}\}$, $\{\bar{Q}_{\bar{\theta}_i}\}$
    \While{not converged}
        \State Sample $(o_i, a_i, r, o_i')_{i=1}^I \sim \mathcal{D}$
        \State
        \State {\color{gray}\textsc{\# Individual behavioral foundation}}
        \State $x_i^0 \sim \mathcal{N}(0, I)$, \; $t \sim \mathrm{Unif}([0,1])$
        \State $x_i^t \gets (1{-}t)\,x_i^0 + t\,a_i$
        \State $\mathcal{L}_{\mathrm{BC}} \gets \| v_w(t, o_i, x_i^t) - (a_i - x_i^0) \|^2$
        \State
        \State {\color{gray}\textsc{\# Decomposed value learning}}
        \State $\{a_i'\} \gets \textsc{{\color{utburnt}Refinement}}(o_i', v_w, \bar{Q}, L_{\mathrm{train}})$ \Comment{\color{darkgray}Alg.~\ref{alg:vgf}}
        \State $y \gets r + \gamma \sum_i(\bar{Q}_{\bar{\theta}_i}(o_i', a_i'))$
        \State $\mathcal{L}_{\mathrm{critic}} \leftarrow \tfrac{1}{2}|\sum_i Q_{\theta_i}(o_i,a_i) - y|^2$
    \EndWhile
    \State
    \State {\color{gray}\textsc{\# Evaluation}}
    \While{episode not done}
        \For{agent $i = 1, \ldots, I$ \textbf{in parallel}}
            \State $\{a_{i,j}^{(0)}\}_{j=1}^P \gets \textsc{{\color{myblue}ODE}}(o_i, v_w, P)$ \Comment{\color{darkgray}Alg.~\ref{alg:flow_ode}}
            \State $\{a_{i,j}^{(L)}\} \gets \textsc{{\color{utburnt}Refinement}}(o_i, Q_{\theta_i}, \{a_{i,j}^{(0)}\}, L_{\mathrm{test}})$ \Comment{\color{darkgray}Alg.~\ref{alg:vgf}}
            \State $a_i^* \gets \arg\max_j\, Q_{\theta_i}(o_i, a_{i,j}^{(L)})$
        \EndFor
        \State Execute $(a_1^*, \ldots, a_I^*)$, observe $(o_i', r, d)$
    \EndWhile
\end{algorithmic}
\end{algorithm}
\end{minipage}%
\hfill
\begin{minipage}[t]{0.46\textwidth}
\begin{algorithm}[H]
\caption{ODE solver}
\label{alg:flow_ode}
\begin{algorithmic}
    \fontsize{8pt}{10pt}\selectfont
    \Require observation $o_i$, velocity field $v_w$, particles $P$
    \Function{{\color{myblue}ODE}}{$o_i, v_w, P$}
        \State $a \gets z \sim \mathcal{N}(0, I)$ \Comment{$P$ particles}
        \For{$k = 0, \ldots, M{-}1$}
            \State $a \gets a + \tfrac{1}{M}\, v_w\!\bigl(\tfrac{k}{M},\, o_i,\, a\bigr)$
        \EndFor
        \State \Return $a$ \Comment{behavioral samples}
    \EndFunction
\end{algorithmic}
\end{algorithm}

\vspace{2em}

\begin{algorithm}[H]
\caption{\texttt{SCOUT}'s refinement}
\label{alg:vgf}
\begin{algorithmic}
    \fontsize{8pt}{10pt}\selectfont
    \Require observation $o_i$, critic $Q_i$, particles $\{a_{i,j}\}$, refinement steps $L$, refinement step size $\epsilon$
    \Function{{\color{utburnt}Refinement}}{$o_i, Q_i, \{a_{i,j}\}, L$}
        \State $a_{i,j}^{(0)} \gets a_{i,j}$ 
        \For{$\ell = 0, \ldots, L{-}1$}
            \State $\hat{\phi}_i^* \!\gets\! \tfrac{1}{P}\!\textstyle\sum_j \!\bigl[\tfrac{1}{\alpha}\,k(a_j, \cdot)\,\nabla_{\!a_j}\! Q_i \!+\! \nabla_{\!a_j}\! k(a_j, \cdot)\bigr]$
            \State $a_{i,j} \gets a_{i,j} + \epsilon\,\mathrm{clip}(\hat{\phi}_i,\, c)$
        \EndFor
        \State \Return $\{a_{i,j}\}$ \Comment{refined particles}
    \EndFunction
\end{algorithmic}
\end{algorithm}
\end{minipage}
\end{figure}

%% file: figure/toy.tex
\begin{figure}[t]
    \centering
    \includegraphics[width=\linewidth]{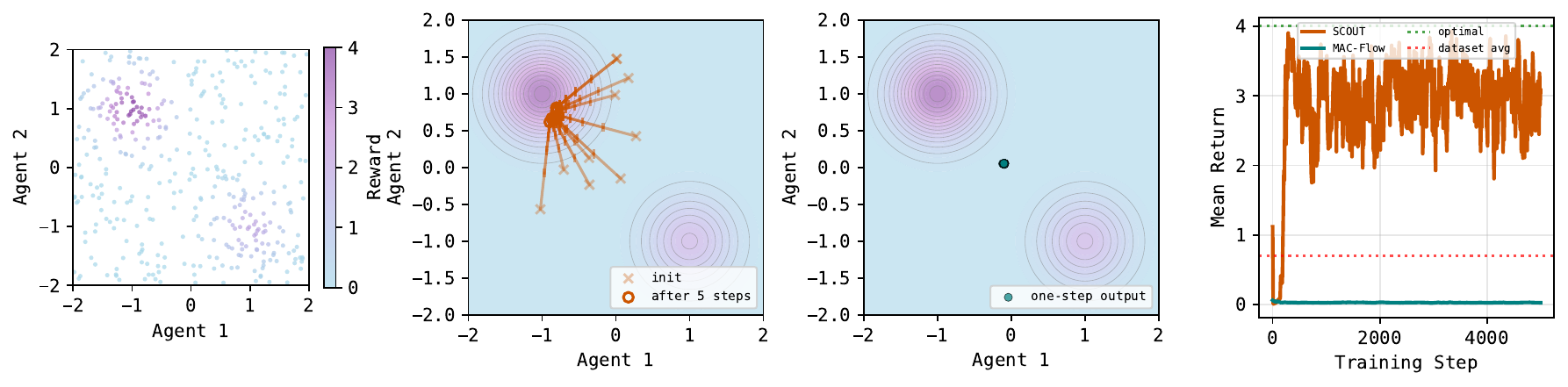}
    \vspace{-0.3cm}
    \caption{\textbf{Toy example.} (\textit{a}) Offline dataset. (\textit{b}) \texttt{SCOUT}'s action refinement ($L_\text{test}=5$). (\textit{c}) \texttt{MAC-Flow}'s one-step distilled policy. (\textit{d}) Learning curve of two algorithms. We plot $10$ evaluation examples.}
    \label{fig: toy}
\end{figure}

%% file: table/smac_.tex
\begin{table}[t]
    \centering
    \caption{A performance comparison for SMACv1 is provided, covering $4$ tasks, and $12$ datasets. Results are averaged over $8$ seeds, with two standard deviations reported after the $\pm$ sign. The superscript $^{\dagger}$ denotes reported scores from~\cite{formanek2023off}.}
    \label{tab:disc_perf}
    \scriptsize
    \resizebox{\textwidth}{!}{
    {\ttfamily
    \begin{tabular}{l l l c c c c c c c c}
    \toprule
    \multicolumn{2}{c}{\multirow{2}{*}{Scenarios}} & \multirow{2}{*}{Dataset}
    & \multicolumn{3}{c}{Gaussian policies}
    & \multicolumn{2}{c}{Diffusion policies}
    & \multicolumn{2}{c}{Flow policies}
    & \multicolumn{1}{c}{w/o policy} \\
    \cmidrule(lr){4-6} \cmidrule(lr){7-8} \cmidrule(lr){9-10} \cmidrule(lr){11-11}
    & & & BC$^{\dagger}$ & MABCQ$^{\dagger}$ & MACQL$^{\dagger}$ & MADiff-D & DoF & Flow BC & MAC-Flow & SCOUT \\
    \midrule

    {\multirow{13}{*}{\rotatebox[origin=c]{90}{SMACv1}}}
    & \multirow{3}{*}{3m} & Good
    & 16.0 \tiny{$\pm$ 1.0} & 16.3 \tiny{$\pm$ 1.5} & 19.6 \tiny{$\pm$ 0.3}
    & 19.3 \tiny{$\pm$ 0.5} & 19.4 \tiny{$\pm$ 0.1}
    & \textcolor{utburnt}{20.0} \tiny{$\pm$ 0.0} & 19.8 \tiny{$\pm$ 1.0} & \textcolor{utburnt}{20.0} \tiny{$\pm$ 0.0} \\

    & & Medium
    & 8.2 \tiny{$\pm$ 0.8} & 18.3 \tiny{$\pm$ 1.2} & \textcolor{utburnt}{18.9} \tiny{$\pm$ 1.2}
    & 16.4 \tiny{$\pm$ 2.6} & 18.2 \tiny{$\pm$ 0.2}
    & 14.7 \tiny{$\pm$ 1.5} & 16.8 \tiny{$\pm$ 1.3} & 17.8 \tiny{$\pm$ 2.2} \\

    & & Poor
    & 4.4 \tiny{$\pm$ 0.1} & 12.4 \tiny{$\pm$ 2.3} & 5.8 \tiny{$\pm$ 0.4}
    & 9.3 \tiny{$\pm$ 2.1} & 8.9 \tiny{$\pm$ 0.5}
    & 4.5 \tiny{$\pm$ 0.1} & 6.8 \tiny{$\pm$ 1.1} & \textcolor{utburnt}{17.0} \tiny{$\pm$ 3.6} \\
    \cmidrule(lr){2-11}

    & \multirow{3}{*}{8m} & Good
    & 16.7 \tiny{$\pm$ 0.4} & 12.7 \tiny{$\pm$ 6.3} & 11.3 \tiny{$\pm$ 6.1}
    & 18.9 \tiny{$\pm$ 1.1} & 19.1 \tiny{$\pm$ 0.1}
    & 19.5 \tiny{$\pm$ 0.2} & 19.4 \tiny{$\pm$ 1.3} & \textcolor{utburnt}{20.0} \tiny{$\pm$ 0.0} \\

    & & Medium
    & 10.7 \tiny{$\pm$ 0.5} & 16.0 \tiny{$\pm$ 1.4} & 16.8 \tiny{$\pm$ 3.1}
    & 16.8 \tiny{$\pm$ 1.6} & 17.5 \tiny{$\pm$ 1.2}
    & 18.2 \tiny{$\pm$ 0.8} & 19.1 \tiny{$\pm$ 1.0} & \textcolor{utburnt}{19.6} \tiny{$\pm$ 0.4} \\

    & & Poor
    & 5.3 \tiny{$\pm$ 0.1} & 5.8 \tiny{$\pm$ 1.4} & 4.6 \tiny{$\pm$ 2.4}
    & 5.8 \tiny{$\pm$ 0.9} & \textcolor{utburnt}{9.2} \tiny{$\pm$ 0.3}
    & 4.9 \tiny{$\pm$ 0.1} & 9.1 \tiny{$\pm$ 1.2} & 8.9 \tiny{$\pm$ 1.8} \\
    \cmidrule(lr){2-11}

    & \multirow{3}{*}{2s3z} & Good
    & 18.2 \tiny{$\pm$ 0.4} & 16.6 \tiny{$\pm$ 1.2} & 19.0 \tiny{$\pm$ 0.8}
    & 18.9 \tiny{$\pm$ 1.2} & 18.2 \tiny{$\pm$ 0.3}
    & 19.5 \tiny{$\pm$ 0.1} & 19.5 \tiny{$\pm$ 1.0} & \textcolor{utburnt}{19.9} \tiny{$\pm$ 0.1} \\

    & & Medium
    & 12.3 \tiny{$\pm$ 0.7} & 13.6 \tiny{$\pm$ 1.5} & 14.3 \tiny{$\pm$ 2.0}
    & \textcolor{utburnt}{17.5} \tiny{$\pm$ 0.9} & \textcolor{utburnt}{17.5} \tiny{$\pm$ 1.1}
    & 15.1 \tiny{$\pm$ 2.0} & 16.0 \tiny{$\pm$ 1.2} & \textcolor{utburnt}{17.5} \tiny{$\pm$ 1.9} \\

    & & Poor
    & 6.7 \tiny{$\pm$ 0.3} & \textcolor{utburnt}{11.5} \tiny{$\pm$ 1.0} & 10.1 \tiny{$\pm$ 0.7}
    & 9.8 \tiny{$\pm$ 1.3} & 9.8 \tiny{$\pm$ 1.6}
    & 6.9 \tiny{$\pm$ 0.8} & 8.4 \tiny{$\pm$ 1.1} & 10.1 \tiny{$\pm$ 1.1} \\
    \cmidrule(lr){2-11}

    & \multirow{3}{*}{5m\_vs\_6m} & Good
    & 16.6 \tiny{$\pm$ 0.6} & 8.3 \tiny{$\pm$ 0.9} & 13.8 \tiny{$\pm$ 3.9}
    & 16.5 \tiny{$\pm$ 2.8} & \textcolor{utburnt}{17.7} \tiny{$\pm$ 1.3}
    & 14.7 \tiny{$\pm$ 2.1} & 17.3 \tiny{$\pm$ 1.0} & 16.8 \tiny{$\pm$ 2.0} \\

    & & Medium
    & 12.4 \tiny{$\pm$ 0.9} & 12.1 \tiny{$\pm$ 1.3} & \textcolor{utburnt}{16.9} \tiny{$\pm$ 1.2}
    & 15.2 \tiny{$\pm$ 2.6} & 16.2 \tiny{$\pm$ 0.9}
    & 12.8 \tiny{$\pm$ 0.8} & 16.6 \tiny{$\pm$ 1.3} & 15.1 \tiny{$\pm$ 1.7} \\

    & & Poor
    & 7.5 \tiny{$\pm$ 0.2} & 11.0 \tiny{$\pm$ 0.9} & 10.4 \tiny{$\pm$ 1.0}
    & 8.9 \tiny{$\pm$ 1.3} & \textcolor{utburnt}{10.8} \tiny{$\pm$ 0.3}
    & 7.7 \tiny{$\pm$ 0.8} & 10.3 \tiny{$\pm$ 1.2} & 10.5 \tiny{$\pm$ 1.1} \\
    \cmidrule(lr){2-11}

    & \multicolumn{2}{c}{Average rewards}
    & 11.2 & 12.9 & 13.5 & 14.4 & 15.2 & 13.2 & 14.9 & \textcolor{utburnt}{16.1} \\
    \bottomrule
    \end{tabular}
    }}
\end{table}

%% file: table/mamujoco.tex
\begin{table}[t]
    \centering
    \caption{{Performance evaluation for MA-MuJoCo}. We present a performance comparison across $3$ tasks and $12$ datasets. The superscript $^{\dagger}$ denotes reported scores from~\cite{lee2025multi}.}
    \vspace{-0.05in}
    \label{tab: cont_perf}
    \scriptsize
    \resizebox{\textwidth}{!}{%
    {\ttfamily
    \begin{tabular}{l l l c c c c c c c }
    \toprule
    \multicolumn{2}{c}{\multirow{2}{*}{{Scenarios}}}  & \multirow{2}{*}{{Dataset}} &
    \multicolumn{4}{c}{\texttt{Offline MARL}} &
    \multicolumn{3}{c}{\texttt{Generative modeling}}\\
    \cmidrule(lr){4-7} \cmidrule(lr){8-10}
    & & & \texttt{MATD3BC}$^{\dagger}$ & \texttt{MACQL}$^{\dagger}$ & \texttt{ICQ}$^{\dagger}$ & \texttt{OMIGA}$^{\dagger}$ & {\texttt{MADiff-D}} & \texttt{MAC-Flow} & SCOUT \\
    \midrule
    {\multirow{13}{*}{\STAB{\rotatebox[origin=c]{90}{{MA-MuJoCo}}}}}
    & \multirow{4}{*}{\shortstack[l]{{Ant}}}
        & {E}
        & 93.8 \tiny{$\pm$ 1.4}
        & {\color{utburnt}{95.8}}\tiny{$\pm$ 1.0}
        & 93.9 \tiny{$\pm$ 0.9}
        & {94.2 \tiny{$\pm$ 0.1}}
        & 94.5 \tiny{$\pm$ 0.8}
        & {{94.5}} \tiny{$\pm$ 0.9}
        & {\color{utburnt}95.8} \tiny{$\pm$ 1.0}\\
    & & {M}
        & 42.4 \tiny{$\pm$ 1.0}
        & {11.2 \tiny{$\pm$ 1.1}}
        & 41.8 \tiny{$\pm$ 1.2}
        & {42.1 \tiny{$\pm$ 0.4}}
        & {{43.0}\tiny{$\pm$ 0.9}}
        & {{43.4}} \tiny{$\pm$ 0.7} 
        & {\color{utburnt}43.6} \tiny{$\pm$ 0.7}\\
    & & {M-R}
        & 8.0 \tiny{$\pm$ 4.2}
        & {-36.8 \tiny{$\pm$ 8.7}}
        & 9.8 \tiny{$\pm$ 4.3}
        & {17.1 \tiny{$\pm$ 7.2}}
        & {{31.6}\tiny{$\pm$ 29.1}}
        & {\color{utburnt}{43.7}} \tiny{$\pm$ 1.4} 
        & 42.5 \tiny{$\pm$ 4.6}\\
    & & {M-E}
        & 59.2 \tiny{$\pm$ 6.1}
        & {{73.5} \tiny{$\pm$ 0.9}}
        & 55.5 \tiny{$\pm$ 6.8}
        & {66.0 \tiny{$\pm$ 8.9}}
        & 67.7 \tiny{$\pm$ 12.7}
        & {{93.3}} \tiny{$\pm$ 0.6} 
        & {\color{utburnt}94.7} \tiny{$\pm$ 0.3}\\
    \cmidrule(lr){2-10} 
    & \multirow{4}{*}{\shortstack[l]{{Hopper}}}
        & {E}
        & 87.8 \tiny{$\pm$ 0.1}
        & {{88.9} \tiny{$\pm$ 13.6}}
        & 18.5 \tiny{$\pm$ 19.9}
        & {21.3 \tiny{$\pm$ 17.3}}
        & 74.6 \tiny{$\pm$ 14.3}
        & {\color{utburnt}95.2} \tiny{$\pm$ 0.2} 
        & {90.2} \tiny{$\pm$ 0.4}
        \\
    & & {M}
        & 21.7 \tiny{$\pm$ 3.8}
        & {22.5 \tiny{$\pm$ 4.9}}
        & 11.7 \tiny{$\pm$ 0.3}
        & {{30.3} \tiny{$\pm$ 13.2}}
        & {{37.0} \tiny{$\pm$ 10.9}}
        & 25.8 \tiny{$\pm$ 6.1}
        & {\color{utburnt}67.6} \tiny{$\pm$ 8.8}\\
    & & {M-R}
        & 5.4 \tiny{$\pm$ 1.0}
        & {-1.1 \tiny{$\pm$ 0.4}}
        & 3.4 \tiny{$\pm$ 2.5}
        & {18.8 \tiny{$\pm$ 12.0}}
        & {{23.4} \tiny{$\pm$ 13.9}}
        & {29.6} \tiny{$\pm$ 10.9}
        & {\color{utburnt}53.0} \tiny{$\pm$ 9.2}\\
    & & {M-E}
        & {\color{utburnt}76.6} \tiny{$\pm$ 11.6}
        & {72.7 \tiny{$\pm$ 3.0}}
        & 7.7 \tiny{$\pm$ 9.1}
        & {17.3 \tiny{$\pm$ 14.5}}
        & 74.0 \tiny{$\pm$ 17.5}
        & {73.0} \tiny{$\pm$ 11.7}
        & 75.5 \tiny{$\pm$ 10.7}
        \\
    \cmidrule(lr){2-10}
    & \multirow{4}{*}{\shortstack[l]{{HalfCheetah}}}
        & {E}
        & 111.3 \tiny{$\pm$ 4.1}
        & {116.1 \tiny{$\pm$ 2.4}}
        & 76.6 \tiny{$\pm$ 11.2}
        & {87.5 \tiny{$\pm$ 13.5}}
        & {\color{utburnt}{120.5}}\tiny{$\pm$ 5.2}
        & {{113.3}} \tiny{$\pm$ 6.6}
        & {111.6} \tiny{$\pm$ 7.9}\\
    & & {M}
        & 68.3 \tiny{$\pm$ 1.7}
        & {82.3 \tiny{$\pm$ 7.5}}
        & 66.6 \tiny{$\pm$ 2.3}
        & {{{93.0}} \tiny{$\pm$ 5.8}}
        & 69.1 \tiny{$\pm$ 8.9}
        & {\color{utburnt}{100.5}} \tiny{$\pm$ 9.0} 
        & {88.9} \tiny{$\pm$ 10.3} \\
    & & {M-R}
        & {{90.7}} \tiny{$\pm$ 2.9}
        & {{90.1} \tiny{$\pm$ 7.2}}
        & 51.5 \tiny{$\pm$ 14.9}
        & {65.5 \tiny{$\pm$ 2.0}}
        & 73.3 \tiny{$\pm$ 7.1}
        & 78.8 \tiny{$\pm$ 10.6} 
        & {\color{utburnt}95.1} \tiny{$\pm$ 6.6}\\
    & & {M-E}
        & 90.4 \tiny{$\pm$ 9.3}
        & {{120.0} \tiny{$\pm$ 4.4}}
        & 74.2 \tiny{$\pm$ 10.2}
        & {76.9 \tiny{$\pm$ 12.6}}
        & 113.6 \tiny{$\pm$ 20.4}
        & {{125.4}} \tiny{$\pm$ 2.0} 
        & {\color{utburnt}126.9} \tiny{$\pm$ 1.3} \\
    \cmidrule(lr){2-10}
    \multicolumn{3}{c}{{Average normalized score}} & 62.97 & 61.27 & 42.60 & 52.50 & 68.52 & 76.37 & {\color{utburnt}82.12} \\
    \bottomrule
    \end{tabular}}}
    \vspace{.15cm}
    {\scriptsize \raggedright Dataset abbreviations denote the data quality: \texttt{E (Expert), M (Medium), M-R (Medium-Replay)}, and \texttt{M-E (Medium-Expert)}. \par}
    \vspace{-.5cm} 
\end{table}

%% file: figure/o2o.tex
\begin{figure}[h]
    \centering
    \includegraphics[width=\linewidth]{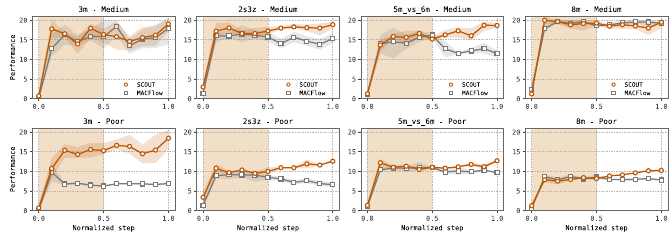}
    \vspace{-0.6cm}
    \caption{\textbf{Offline-to-online RL (8 seeds).} Fine-tuning starts at normalized gradient steps $0.5$.}
    \label{fig:o2o}
    \vspace{-0.2cm}
\end{figure}

%% file: figure/sweep.tex
\begin{figure}[h]
    \centering
    \includegraphics[width=\linewidth]{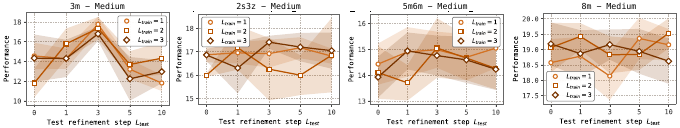}
    \vspace{-0.6cm}
    \caption{\textbf{Hyperparameter sweep.} The plots show how different $L$ affect offline RL performance.}
    \label{fig:sweep}
    \vspace{-0.2cm}
\end{figure}

%% file: appendix/appendix.tex
\par\noindent\rule{\textwidth}{2pt}
\begin{table}[h]
    \vspace{-0.3cm}
    \centering
    \Large
    \begin{tabular}{c}
\textbf{Appendix}
    \end{tabular}
    \vspace{-.5cm}
\end{table}
\par\noindent\rule{\textwidth}{1pt}

\tableofcontents
\addtocontents{toc}{\protect\setcounter{tocdepth}{2}}

\newpage

\section{Proofs}
\label{app:proof}
In this section, we provide complete proofs for the theoretical results in Section 4.3. Throughout, we use the following standing notation. Let $\mathcal{O} = \prod_{i=1}^{I} \mathcal{O}_i$ and $\mathcal{A} = \prod_{i=1}^{I} \mathcal{A}_i$ denote the joint observation and action spaces. Let $\mu_i(\cdot \mid o_i)$ denote the per-agent behavioral prior, and let $\pi_i^*(a_i \mid o_i) \propto \exp(Q_i(o_i, a_i)/\alpha)$ denote the per-agent factorized Boltzmann target. The joint Boltzmann target is defined as follows:
$$\pi_{\text{joint}}^*(a \mid o) \propto \exp\left(\frac{Q_{\text{tot}}(o, a)}{\alpha}\right).$$
We denote by $\tilde{\pi}^L = \prod_{i=1}^{I} \tilde{\pi}_i^L$ the joint policy obtained after $L$ steps of independent per-agent transport in~\eqref{eq: svgd}.

\subsection{Proof of Theorem~\ref{theorem: igm}}
\begin{proof}
Let $\pi_{\text{fact}}^* := \prod_i \pi_i^*$ and $\mathcal{J}_{\text{add}}(\pi) := \mathbb{E}_\pi[\sum_i Q_i] + \alpha\mathcal{H}(\pi)$. The product Boltzmann distribution $\pi_{\text{fact}}^*$ is the unique maximizer of $\mathcal{J}_{\text{add}}$ on the joint simplex (it is the Gibbs distribution for the additive utility), and the standard variational identity yields the gap-as-KL relation:
\begin{equation}
\label{eq: var-identity}
\mathcal{J}_{\text{add}}(\pi_{\text{fact}}^*) - \mathcal{J}_{\text{add}}(\pi) \;=\; \alpha\,\mathrm{KL}\!\bigl(\pi \,\|\, \pi_{\text{fact}}^*\bigr) \quad \text{for any joint } \pi.
\end{equation}
Under Assumption~\ref{ass: decompose}, $|\mathcal{J}_{\text{soft}}(\pi) - \mathcal{J}_{\text{add}}(\pi)| = |\mathbb{E}_\pi[\Delta]| \leq \delta$ for any $\pi$.

Combining \eqref{eq: var-identity} with the $\delta$-shift between $\mathcal{J}_{\text{soft}}$ and $\mathcal{J}_{\text{add}}$:
\begin{align}
\mathcal{J}_{\text{soft}}(\pi_{\text{fact}}^*) - \mathcal{J}_{\text{soft}}(\tilde{\pi}^L) 
&\leq \bigl[\mathcal{J}_{\text{add}}(\pi_{\text{fact}}^*) + \delta\bigr] - \bigl[\mathcal{J}_{\text{add}}(\tilde{\pi}^L) - \delta\bigr] \nonumber\\
&= \alpha\,\mathrm{KL}\!\bigl(\tilde{\pi}^L \,\|\, \pi_{\text{fact}}^*\bigr) + 2\delta. \label{eq: app-step1}
\end{align}

Since $\pi_{\text{joint}}^*$ maximizes $\mathcal{J}_{\text{soft}}$, $\mathcal{J}_{\text{soft}}(\pi_{\text{joint}}^*) - \mathcal{J}_{\text{soft}}(\pi_{\text{fact}}^*) \geq 0$. For the upper bound, since $\pi_{\text{fact}}^*$ maximizes $\mathcal{J}_{\text{add}}$, $\mathcal{J}_{\text{add}}(\pi_{\text{fact}}^*) \geq \mathcal{J}_{\text{add}}(\pi_{\text{joint}}^*)$, so
\begin{equation}
\mathcal{J}_{\text{soft}}(\pi_{\text{joint}}^*) - \mathcal{J}_{\text{soft}}(\pi_{\text{fact}}^*) \leq \bigl[\mathcal{J}_{\text{add}}(\pi_{\text{joint}}^*) + \delta\bigr] - \bigl[\mathcal{J}_{\text{add}}(\pi_{\text{fact}}^*) - \delta\bigr] \leq 2\delta. \label{eq: app-step2}
\end{equation}

Adding \eqref{eq: app-step1} and \eqref{eq: app-step2}:
$$\mathcal{J}_{\text{soft}}(\pi_{\text{joint}}^*) - \mathcal{J}_{\text{soft}}(\tilde{\pi}^L) \;\leq\; \alpha\,\mathrm{KL}\!\bigl(\tilde{\pi}^L \,\|\, \pi_{\text{fact}}^*\bigr) + 4\delta.$$
The lower bound $0$ follows directly from the $\mathcal{J}_{\text{soft}}$-optimality of $\pi_{\text{joint}}^*$.
\end{proof}

\subsection{Proof of Proposition~\ref{prop: transport}}
\begin{proof}
Mathematical derivations are divided into two parts of Proposition~\ref{prop: transport}. For \textbf{(\textit{i})}, the per-agent transport for agent $i$ as follows:
$$ a_{i,j}^{(\ell+1)} = a_{i,j}^{(\ell)} + \epsilon\,\mathrm{clip}_c\bigl(\hat{\phi}_i^*\bigl(a_{i,j}^{(\ell)}; \{a_{i,k}^{(\ell)}\}_{k=1}^P, Q_i(o_i, \cdot)\bigr)\bigr),$$
depends only on agent-$i$ particles and $Q_i$. Initialization is independent across agents and no cross-agent information enters any update, so independence is preserved:
\begin{equation}
  \label{eq: tra-p}
  \bigl(a_{i,j}^{(L)}\bigr)_{j=1}^P \perp\!\!\!\perp \bigl(a_{i',j}^{(L)}\bigr)_{j=1}^P \quad \text{for } i \neq i'. 
\end{equation}
Per-agent selection $j_i^* = \arg\max_j Q_i(o_i, a_{i,j}^{(L)})$ depends only on agent-$i$ particles, so $a_i^* = a_{i,j_i^*}^{(L)}$ are functions of independent sets, giving $\mathrm{Cov}(a_i^*, a_{i'}^*) = 0$. When $\Delta = 0$, $Q_{\mathrm{tot}} = \sum_i Q_i$, and per-agent argmax is jointly optimal by the IGM identity~\eqref{eq: igm}.

Next, for \textbf{(\textit{ii})}, let $j^* \in [P]$ be any joint candidate index. By per-agent optimality, $\sum_i Q_i(o_i, a_{i,j_i^*}^{(L)}) \geq \sum_i Q_i(o_i, a_{i,j^*}^{(L)})$. Using $Q_{\mathrm{tot}} = \sum_i Q_i + \Delta$ and $\|\Delta\|_\infty \leq \delta$:
\begin{align}
Q_{\mathrm{tot}}\bigl(a_{1,j_1^*}^{(L)}, \ldots, a_{I,j_I^*}^{(L)}\bigr) 
&= \textstyle\sum_i Q_i\bigl(a_{i,j_i^*}^{(L)}\bigr) + \Delta\bigl(a_{1,j_1^*}^{(L)}, \ldots, a_{I,j_I^*}^{(L)}\bigr) \nonumber \\
&\geq \textstyle\sum_i Q_i\bigl(a_{i,j^*}^{(L)}\bigr) - \delta \nonumber \\
&\geq Q_{\mathrm{tot}}\bigl(a_{1,j^*}^{(L)}, \ldots, a_{I,j^*}^{(L)}\bigr) - 2\delta. \label{eq: per-agent-gap}
\end{align}
Maximizing over $j^* \in [P]$ on the right-hand side yields the claim.
\end{proof}

\section{Experimental Details}
\label{app:exp}
\subsection{Computational resources}
All experiments were conducted on a single server equipped with an \texttt{Intel Xeon E5-2650 v3} CPU and an \texttt{NVIDIA A40} GPU. The software stack consists of \texttt{CUDA 12.2, cuDNN 8.9.7, Python 3.9, and JAX 0.4.30.} 

\subsection{Implementation details}

\textbf{Flow BC prior.} We instantiate the foundation as a conditional flow-matching model that learns the behavior distribution of the offline dataset. Given an observation and a Gaussian noise sample, the actor parameterizes a velocity field that transports $x_0$ to a data action $x_1$ along the linear interpolation, trained by minimizing the flow-matching loss. At inference time, candidate actions are produced by a fixed-step Euler integration of the learned ODE, so the prior remains an explicit, multimodal generative model of the dataset rather than a unimodal collapse. Because the resulting samples are biased toward the modeled behavioral support, they encourage the agent to remain near in-distribution behaviors, while still preserving the diverse coordination patterns present in the offline corpus. 

\textbf{Sinusoidal time embedding.} The flow conditioning variable $t\in[0,1]$ carries information about how close the current sample $x_t$ is to data; feeding it as a raw scalar concatenated to the network input yields poor expressivity, since a single-channel signal interacts weakly with high-dimensional weights and forces the actor to learn very different behaviors at neighboring $t$ values from a one-dimensional cue. We therefore lift $t$ into a Fourier feature space $\varphi(t)=\bigl[\sin(2^k\pi t),\cos(2^k\pi t)\bigr]_{k=0}^{K-1}\in\mathbb{R}^{2K}$ with $K=8$, echoing the positional encodings widely used in diffusion and neural-field models. The exponentially spaced frequencies provide a multi-resolution decomposition of the time axis: low-frequency components encode the coarse phase of the integration trajectory while high-frequency components let the velocity field model sharp transitions near $t=0$ and $t=1$. 

\textbf{Online fine-tuning.} After offline gradient steps, we roll out the policy and store the resulting trajectories in an online replay buffer. After an additional warm-up step with no online updates, each gradient step samples a $50:50$ mixture of offline data and newly collected online data. 

\subsection{Hyperparameters}
We select $L_\text{train}$ and $L_\text{test}$ from a small grid using a held-out validation split of the offline dataset; no environment interaction is used for selection. Baselines use their officially reported hyperparameters with comparable tuning budgets. 

\begin{table}[h]
    \centering
    \begin{minipage}{.48\textwidth}
        \centering
        \caption{Hyperparameter for RL optimization}
        \ttfamily\small
        \begin{tabular}{lc}
            \toprule
            \normalfont Hyperparameter & \normalfont Default \\
            \midrule
            Flow prior network & $(256, 256, 256, 256)$ \\
            Value prior network & $(256, 256, 256, 256)$ \\
            Layer normalization & value \\
            Learning rate & $0.0003$ \\
            Q value Aggregation & mean \\
            Flow steps & 10 \\
            $t$ embedding frequency & 8 \\
            $\tau$ & $0.005$ \\
            $\gamma$ & $0.99$ \\
            \bottomrule
        \end{tabular}
    \end{minipage}
    \hfill 
    \begin{minipage}{.48\textwidth}
        \centering
        \caption{Hyperparameter for \texttt{SCOUT}}
        \ttfamily\small
        \begin{tabular}{lc}
            \toprule
            \normalfont Hyperparameter & \normalfont Default \\
            \midrule
            $\epsilon$ refinement step size & $0.02$ \\
            $P$ $\#$ of particles & $10$ \\
            $L_\text{train}$ & $0$ \\
            $L_\text{test}$ & $0$ \\
            Particle selection (train) & mean \\
            Particle selection (test) & max \\
            $\alpha$ temperature & $1.0$ \\
            \bottomrule
        \end{tabular}
    \end{minipage}
\end{table}

\begin{table}[h]
    \centering
    \caption{Hyperparameter selection of \texttt{SCOUT} for offline RL configuration}
    \label{tab: default}
    \ttfamily{\begin{tabular}{lcc | lcc}
    \toprule
    \normalfont Configuration & $L_\text{train}$ & $L_\text{test}$ & \normalfont Configuration & $L_\text{train}$ & $L_\text{test}$\\
    \midrule
    3m - Good & $1$ & $1$ & 2s3z - Good & $1$ & $0$\\
    3m - Medium & $1$ & $3$ & 2s3z - Medium & $2$ & $1$ \\
    3m - Poor & $2$ & $3$ & 2s3z - Poor & $3$ & $10$ \\
    8m - Good & $3$ & $0$ & 5m\_vs\_6m - Good & $3$& $3$\\
    8m - Medium & $2$ & $10$ & 5m\_vs\_6m - Medium & $2$& $3$ \\
    8m - Poor & $3$ & $10$ & 5m\_vs\_6m - Poor &$1$& $3$ \\
    ant - Expert & $2$ & $0$ & ant - Medium-Expert & $0$ & $0$ \\
    ant - Medium & $2$ & $0$ & ant - Medium-Replay & $0$ & $0$ \\
    hopper - Expert & $2$ & $10$ & hopper - Medium-Expert & $2$ & $3$ \\
    hopper - Medium & $2$ & $3$ & hopper - Medium-Replay & $3$ & $0$ \\
    halfcheetah - Expert & $2$ & $3$ & halfcheetah - Medium-Expert & $3$ & $0$ \\
    halfcheetah - Medium & $0$ & $1$ & halfcheetah - Medium-Replay & $3$ & $1$ \\
\bottomrule
    \end{tabular}}
\end{table}

\subsection{Baselines}
\textbf{\texttt{MADiff-D}}~\cite{zhu2024madiff}. We utilize the official open-source implementation, specifically adopting the decentralized variant, \texttt{MADiff-D}, to align with our Dec-POMDP formulation. The model trains a conditional diffusion policy on centralized joint demonstrations but executes decentrally; each agent samples actions via reverse diffusion based solely on local observations while predicting teammates' trajectories. We retain the original architecture and hyperparameters, applying a Gaussian DDPM scheduler with $200$ denoising steps across all environments.

\textbf{\texttt{DoF}}~\cite{li2025dof}. We use the official implementation of the \texttt{DoF-Trajectory} agent equipped with the \texttt{W-concat} factorization. All training and deployment configurations strictly follow the original paper. For action sampling, we similarly employ a Gaussian DDPM scheduler with $200$ denoising steps for all environments.

\textbf{\texttt{MAC-Flow}}~\cite{lee2025multi}. We utilize the official open-source implementation. \texttt{MAC-Flow} models multi-modal data distributions via flow matching, utilizes a learned Q-function for guidance, and distills the guided flow into one-step decentralized policies. We maintain the original architecture and hyperparameters for all experiments.

\section{Additional Results}
\input{figure/o2o-continuous}
\textbf{Offline-to-online RL.} Figure~\ref{fig:o2o-con} extends the offline-to-online evaluation to the continuous control domain. Across all six configurations, SCOUT demonstrates consistent improvement after the online transition at normalized step $0.5$. These empirical results suggest that the undeformed flow prior allows the transport budget to flexibly exploit newly collected interaction data. Together with the SMACv1 results in Figure~\ref{fig:o2o}, \texttt{SCOUT} achieves positive online improvement in all offline-to-online RL configurations across both benchmarks, confirming that the test-time transport paradigm generalizes across discrete and continuous action space types.

\input{figure/sweep-con}
\textbf{Hyperparameter sweep.} Figure~\ref{fig: sweep-con} reports the sensitivity of \texttt{SCOUT} to the refinement steps $L_\text{train}$ and $L_\text{test}$ across all four dataset qualities in MA-MuJoCo. In the Expert and Medium-Expert datasets, performance is largely insensitive to the choice of $L_\text{test}$, which is expected because the behavioral prior already closely matches high-value regions, leaving little room for gradient transport to improve upon. In contrast, the Medium and Medium-Replay datasets exhibit a clearer benefit from moderate refinement steps, where increasing $L_\text{test}$ from 0 to 3 yields substantial gains. However, excessive refinement $L_\text{test} = 10$ can degrade performance in some cases, suggesting that the critic's accuracy degrades when actions are transported too far from the behavioral support.

%% file: figure/o2o-continuous.tex
\begin{figure}[h]
    \centering
    \includegraphics[width=\linewidth]{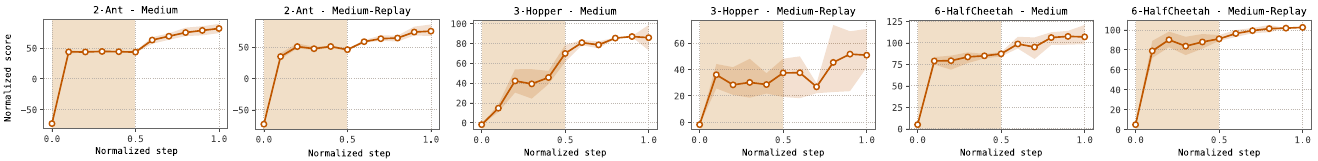}
    \caption{\textbf{Offline-to-online RL in MA-MuJoCo (8 seeds.)}}
    \label{fig:o2o-con}
\end{figure}

%% file: figure/sweep-con.tex
\begin{figure}[h]
    \centering
    \includegraphics[width=\linewidth]{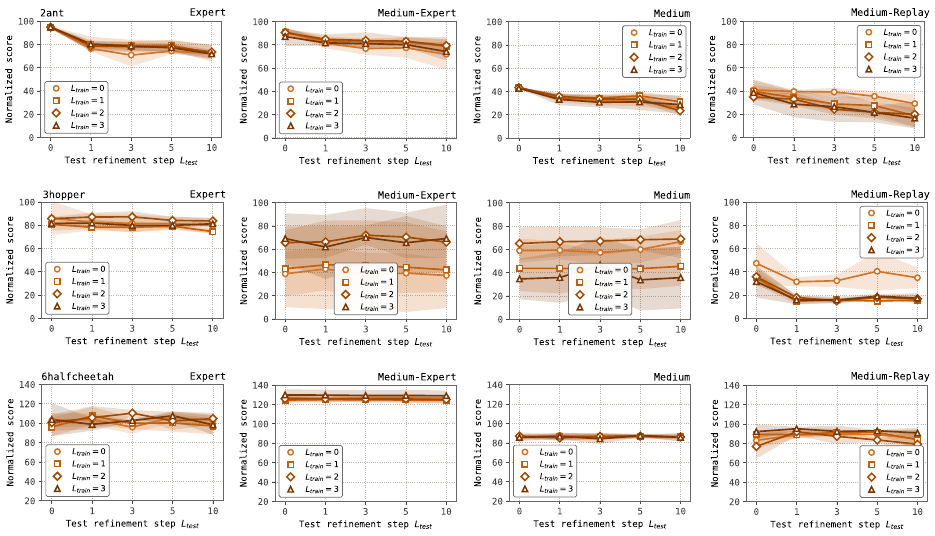}
    \caption{\textbf{Hyperparameter sweep on MA-MuJoCo.}}
    \label{fig: sweep-con}
\end{figure}

%% file: appendix/checklist.tex
\newpage
\section*{NeurIPS Paper Checklist}

\begin{enumerate}

\item {\bf Claims}
    \item[] Question: Do the main claims made in the abstract and introduction accurately reflect the paper's contributions and scope?
    \item[] Answer: \answerYes{}
    \item[] Justification: The abstract and introduction clearly state three contributions.
    \item[] Guidelines:
    \begin{itemize}
        \item The answer NA means that the abstract and introduction do not include the claims made in the paper.
        \item The abstract and/or introduction should clearly state the claims made, including the contributions made in the paper and important assumptions and limitations. A No or NA answer to this question will not be perceived well by the reviewers. 
        \item The claims made should match theoretical and experimental results, and reflect how much the results can be expected to generalize to other settings. 
        \item It is fine to include aspirational goals as motivation as long as it is clear that these goals are not attained by the paper. 
    \end{itemize}

\item {\bf Limitations}
    \item[] Question: Does the paper discuss the limitations of the work performed by the authors?
    \item[] Answer: \answerYes{}
    \item[] Justification: The paper includes a dedicated limitation discussion in the Conclusion section.
    \item[] Guidelines:
    \begin{itemize}
        \item The answer NA means that the paper has no limitation while the answer No means that the paper has limitations, but those are not discussed in the paper. 
        \item The authors are encouraged to create a separate "Limitations" section in their paper.
        \item The paper should point out any strong assumptions and how robust the results are to violations of these assumptions (e.g., independence assumptions, noiseless settings, model well-specification, asymptotic approximations only holding locally). The authors should reflect on how these assumptions might be violated in practice and what the implications would be.
        \item The authors should reflect on the scope of the claims made, e.g., if the approach was only tested on a few datasets or with a few runs. In general, empirical results often depend on implicit assumptions, which should be articulated.
        \item The authors should reflect on the factors that influence the performance of the approach. For example, a facial recognition algorithm may perform poorly when image resolution is low or images are taken in low lighting. Or a speech-to-text system might not be used reliably to provide closed captions for online lectures because it fails to handle technical jargon.
        \item The authors should discuss the computational efficiency of the proposed algorithms and how they scale with dataset size.
        \item If applicable, the authors should discuss possible limitations of their approach to address problems of privacy and fairness.
        \item While the authors might fear that complete honesty about limitations might be used by reviewers as grounds for rejection, a worse outcome might be that reviewers discover limitations that aren't acknowledged in the paper. The authors should use their best judgment and recognize that individual actions in favor of transparency play an important role in developing norms that preserve the integrity of the community. Reviewers will be specifically instructed to not penalize honesty concerning limitations.
    \end{itemize}

\item {\bf Theory assumptions and proofs}
    \item[] Question: For each theoretical result, does the paper provide the full set of assumptions and a complete (and correct) proof?
    \item[] Answer: \answerYes{}
    \item[] Justification: The paper states Assumption~\ref{ass: decompose} explicitly and provides three theoretical results (Theorem~\ref{theorem: igm}, Proposition~\ref{prop: transport}) with complete proofs in Appendix~\ref{app:proof}. 
    \item[] Guidelines:
    \begin{itemize}
        \item The answer NA means that the paper does not include theoretical results. 
        \item All the theorems, formulas, and proofs in the paper should be numbered and cross-referenced.
        \item All assumptions should be clearly stated or referenced in the statement of any theorems.
        \item The proofs can either appear in the main paper or the supplemental material, but if they appear in the supplemental material, the authors are encouraged to provide a short proof sketch to provide intuition. 
        \item Inversely, any informal proof provided in the core of the paper should be complemented by formal proofs provided in appendix or supplemental material.
        \item Theorems and Lemmas that the proof relies upon should be properly referenced. 
    \end{itemize}

    \item {\bf Experimental result reproducibility}
    \item[] Question: Does the paper fully disclose all the information needed to reproduce the main experimental results of the paper to the extent that it affects the main claims and/or conclusions of the paper (regardless of whether the code and data are provided or not)?
    \item[] Answer:  \answerYes{}
    \item[] Justification: The paper provides detailed algorithm pseudocode (Algorithm~\ref{alg:scout}), complete hyperparameter tables, implementation details, and baseline descriptions in Appendix~\ref{app:exp}.
    \item[] Guidelines:
    \begin{itemize}
        \item The answer NA means that the paper does not include experiments.
        \item If the paper includes experiments, a No answer to this question will not be perceived well by the reviewers: Making the paper reproducible is important, regardless of whether the code and data are provided or not.
        \item If the contribution is a dataset and/or model, the authors should describe the steps taken to make their results reproducible or verifiable. 
        \item Depending on the contribution, reproducibility can be accomplished in various ways. For example, if the contribution is a novel architecture, describing the architecture fully might suffice, or if the contribution is a specific model and empirical evaluation, it may be necessary to either make it possible for others to replicate the model with the same dataset, or provide access to the model. In general. releasing code and data is often one good way to accomplish this, but reproducibility can also be provided via detailed instructions for how to replicate the results, access to a hosted model (e.g., in the case of a large language model), releasing of a model checkpoint, or other means that are appropriate to the research performed.
        \item While NeurIPS does not require releasing code, the conference does require all submissions to provide some reasonable avenue for reproducibility, which may depend on the nature of the contribution. For example
        \begin{enumerate}
            \item If the contribution is primarily a new algorithm, the paper should make it clear how to reproduce that algorithm.
            \item If the contribution is primarily a new model architecture, the paper should describe the architecture clearly and fully.
            \item If the contribution is a new model (e.g., a large language model), then there should either be a way to access this model for reproducing the results or a way to reproduce the model (e.g., with an open-source dataset or instructions for how to construct the dataset).
            \item We recognize that reproducibility may be tricky in some cases, in which case authors are welcome to describe the particular way they provide for reproducibility. In the case of closed-source models, it may be that access to the model is limited in some way (e.g., to registered users), but it should be possible for other researchers to have some path to reproducing or verifying the results.
        \end{enumerate}
    \end{itemize}

\item {\bf Open access to data and code}
    \item[] Question: Does the paper provide open access to the data and code, with sufficient instructions to faithfully reproduce the main experimental results, as described in supplemental material?
    \item[] Answer: \answerYes{}
    \item[] Justification: The paper provides an anonymous code repository link in the Abstract. The offline datasets used from the off-the-grid dataset~\cite{formanek2023off} are publicly available benchmarks.
    \item[] Guidelines:
    \begin{itemize}
        \item The answer NA means that paper does not include experiments requiring code.
        \item Please see the NeurIPS code and data submission guidelines (\url{https://nips.cc/public/guides/CodeSubmissionPolicy}) for more details.
        \item While we encourage the release of code and data, we understand that this might not be possible, so “No” is an acceptable answer. Papers cannot be rejected simply for not including code, unless this is central to the contribution (e.g., for a new open-source benchmark).
        \item The instructions should contain the exact command and environment needed to run to reproduce the results. See the NeurIPS code and data submission guidelines (\url{https://nips.cc/public/guides/CodeSubmissionPolicy}) for more details.
        \item The authors should provide instructions on data access and preparation, including how to access the raw data, preprocessed data, intermediate data, and generated data, etc.
        \item The authors should provide scripts to reproduce all experimental results for the new proposed method and baselines. If only a subset of experiments are reproducible, they should state which ones are omitted from the script and why.
        \item At submission time, to preserve anonymity, the authors should release anonymized versions (if applicable).
        \item Providing as much information as possible in supplemental material (appended to the paper) is recommended, but including URLs to data and code is permitted.
    \end{itemize}

\item {\bf Experimental setting/details}
    \item[] Question: Does the paper specify all the training and test details (e.g., data splits, hyperparameters, how they were chosen, type of optimizer, etc.) necessary to understand the results?
    \item[] Answer: \answerYes{}
    \item[] Justification: Section~\ref{sec: exp} describes the experimental setup, \textit{including} testbeds, datasets, and baselines. Appendix~\ref{app:exp} provides comprehensive implementation details, default and per-configuration hyperparameters, and baseline configurations.
    \item[] Guidelines:
    \begin{itemize}
        \item The answer NA means that the paper does not include experiments.
        \item The experimental setting should be presented in the core of the paper to a level of detail that is necessary to appreciate the results and make sense of them.
        \item The full details can be provided either with the code, in appendix, or as supplemental material.
    \end{itemize}

\item {\bf Experiment statistical significance}
    \item[] Question: Does the paper report error bars suitably and correctly defined or other appropriate information about the statistical significance of the experiments?
    \item[] Answer:  \answerYes{}
    \item[] Justification: All results in Tables~\ref{tab:disc_perf} and~\ref{tab: cont_perf} report mean $\pm$ two standard deviations over $8$ random seeds.
    \item[] Guidelines:
    \begin{itemize}
        \item The answer NA means that the paper does not include experiments.
        \item The authors should answer "Yes" if the results are accompanied by error bars, confidence intervals, or statistical significance tests, at least for the experiments that support the main claims of the paper.
        \item The factors of variability that the error bars are capturing should be clearly stated (for example, train/test split, initialization, random drawing of some parameter, or overall run with given experimental conditions).
        \item The method for calculating the error bars should be explained (closed form formula, call to a library function, bootstrap, etc.)
        \item The assumptions made should be given (e.g., Normally distributed errors).
        \item It should be clear whether the error bar is the standard deviation or the standard error of the mean.
        \item It is OK to report 1-sigma error bars, but one should state it. The authors should preferably report a 2-sigma error bar than state that they have a 96\% CI, if the hypothesis of Normality of errors is not verified.
        \item For asymmetric distributions, the authors should be careful not to show in tables or figures symmetric error bars that would yield results that are out of range (e.g. negative error rates).
        \item If error bars are reported in tables or plots, The authors should explain in the text how they were calculated and reference the corresponding figures or tables in the text.
    \end{itemize}

\item {\bf Experiments compute resources}
    \item[] Question: For each experiment, does the paper provide sufficient information on the computer resources (type of compute workers, memory, time of execution) needed to reproduce the experiments?
    \item[] Answer: \answerYes{}
    \item[] Justification: The paper reports the type of GPU/CPU hardware used for the experiments.
    \item[] Guidelines:
    \begin{itemize}
        \item The answer NA means that the paper does not include experiments.
        \item The paper should indicate the type of compute workers CPU or GPU, internal cluster, or cloud provider, including relevant memory and storage.
        \item The paper should provide the amount of compute required for each of the individual experimental runs as well as estimate the total compute. 
        \item The paper should disclose whether the full research project required more compute than the experiments reported in the paper (e.g., preliminary or failed experiments that didn't make it into the paper). 
    \end{itemize}
    
\item {\bf Code of ethics}
    \item[] Question: Does the research conducted in the paper conform, in every respect, with the NeurIPS Code of Ethics \url{https://neurips.cc/public/EthicsGuidelines}?
    \item[] Answer: \answerYes{}
    \item[] Justification: The work conforms with the NeurIPS Code of Ethics.
    \item[] Guidelines:
    \begin{itemize}
        \item The answer NA means that the authors have not reviewed the NeurIPS Code of Ethics.
        \item If the authors answer No, they should explain the special circumstances that require a deviation from the Code of Ethics.
        \item The authors should make sure to preserve anonymity (e.g., if there is a special consideration due to laws or regulations in their jurisdiction).
    \end{itemize}

\item {\bf Broader impacts}
    \item[] Question: Does the paper discuss both potential positive societal impacts and negative societal impacts of the work performed?
    \item[] Answer: \answerNA{}
    \item[] Justification: This work is foundational research on offline MARL. It does not have a direct path to negative societal applications.
    \item[] Guidelines:
    \begin{itemize}
        \item The answer NA means that there is no societal impact of the work performed.
        \item If the authors answer NA or No, they should explain why their work has no societal impact or why the paper does not address societal impact.
        \item Examples of negative societal impacts include potential malicious or unintended uses (e.g., disinformation, generating fake profiles, surveillance), fairness considerations (e.g., deployment of technologies that could make decisions that unfairly impact specific groups), privacy considerations, and security considerations.
        \item The conference expects that many papers will be foundational research and not tied to particular applications, let alone deployments. However, if there is a direct path to any negative applications, the authors should point it out. For example, it is legitimate to point out that an improvement in the quality of generative models could be used to generate deepfakes for disinformation. On the other hand, it is not needed to point out that a generic algorithm for optimizing neural networks could enable people to train models that generate Deepfakes faster.
        \item The authors should consider possible harms that could arise when the technology is being used as intended and functioning correctly, harms that could arise when the technology is being used as intended but gives incorrect results, and harms following from (intentional or unintentional) misuse of the technology.
        \item If there are negative societal impacts, the authors could also discuss possible mitigation strategies (e.g., gated release of models, providing defenses in addition to attacks, mechanisms for monitoring misuse, mechanisms to monitor how a system learns from feedback over time, improving the efficiency and accessibility of ML).
    \end{itemize}
    
\item {\bf Safeguards}
    \item[] Question: Does the paper describe safeguards that have been put in place for responsible release of data or models that have a high risk for misuse (e.g., pretrained language models, image generators, or scraped datasets)?
    \item[] Answer: \answerNA{}
    \item[] Justification: The paper does not pose risks for misuse comparable to pretrained language models or image generators.
    \item[] Guidelines:
    \begin{itemize}
        \item The answer NA means that the paper poses no such risks.
        \item Released models that have a high risk for misuse or dual-use should be released with necessary safeguards to allow for controlled use of the model, for example by requiring that users adhere to usage guidelines or restrictions to access the model or implementing safety filters. 
        \item Datasets that have been scraped from the Internet could pose safety risks. The authors should describe how they avoided releasing unsafe images.
        \item We recognize that providing effective safeguards is challenging, and many papers do not require this, but we encourage authors to take this into account and make a best faith effort.
    \end{itemize}

\item {\bf Licenses for existing assets}
    \item[] Question: Are the creators or original owners of assets (e.g., code, data, models), used in the paper, properly credited and are the license and terms of use explicitly mentioned and properly respected?
    \item[] Answer: \answerYes{}
    \item[] Justification: The paper cites the original papers for all benchmarks and datasets used.
    \item[] Guidelines:
    \begin{itemize}
        \item The answer NA means that the paper does not use existing assets.
        \item The authors should cite the original paper that produced the code package or dataset.
        \item The authors should state which version of the asset is used and, if possible, include a URL.
        \item The name of the license (e.g., CC-BY 4.0) should be included for each asset.
        \item For scraped data from a particular source (e.g., website), the copyright and terms of service of that source should be provided.
        \item If assets are released, the license, copyright information, and terms of use in the package should be provided. For popular datasets, \url{paperswithcode.com/datasets} has curated licenses for some datasets. Their licensing guide can help determine the license of a dataset.
        \item For existing datasets that are re-packaged, both the original license and the license of the derived asset (if it has changed) should be provided.
        \item If this information is not available online, the authors are encouraged to reach out to the asset's creators.
    \end{itemize}

\item {\bf New assets}
    \item[] Question: Are new assets introduced in the paper well documented and is the documentation provided alongside the assets?
    \item[] Answer: \answerYes{}
    \item[] Justification: The paper releases code via an anonymous repository linked in the abstract.
    \item[] Guidelines:
    \begin{itemize}
        \item The answer NA means that the paper does not release new assets.
        \item Researchers should communicate the details of the dataset/code/model as part of their submissions via structured templates. This includes details about training, license, limitations, etc. 
        \item The paper should discuss whether and how consent was obtained from people whose asset is used.
        \item At submission time, remember to anonymize your assets (if applicable). You can either create an anonymized URL or include an anonymized zip file.
    \end{itemize}

\item {\bf Crowdsourcing and research with human subjects}
    \item[] Question: For crowdsourcing experiments and research with human subjects, does the paper include the full text of instructions given to participants and screenshots, if applicable, as well as details about compensation (if any)? 
    \item[] Answer: \answerNA{}
    \item[] Justification: This paper does not involve crowdsourcing or research with human subjects.
    \item[] Guidelines:
    \begin{itemize}
        \item The answer NA means that the paper does not involve crowdsourcing nor research with human subjects.
        \item Including this information in the supplemental material is fine, but if the main contribution of the paper involves human subjects, then as much detail as possible should be included in the main paper. 
        \item According to the NeurIPS Code of Ethics, workers involved in data collection, curation, or other labor should be paid at least the minimum wage in the country of the data collector. 
    \end{itemize}

\item {\bf Institutional review board (IRB) approvals or equivalent for research with human subjects}
    \item[] Question: Does the paper describe potential risks incurred by study participants, whether such risks were disclosed to the subjects, and whether Institutional Review Board (IRB) approvals (or an equivalent approval/review based on the requirements of your country or institution) were obtained?
    \item[] Answer: \answerNA{}
    \item[] Justification: This paper does not involve crowdsourcing or research with human subjects.
    \item[] Guidelines:
    \begin{itemize}
        \item The answer NA means that the paper does not involve crowdsourcing nor research with human subjects.
        \item Depending on the country in which research is conducted, IRB approval (or equivalent) may be required for any human subjects research. If you obtained IRB approval, you should clearly state this in the paper. 
        \item We recognize that the procedures for this may vary significantly between institutions and locations, and we expect authors to adhere to the NeurIPS Code of Ethics and the guidelines for their institution. 
        \item For initial submissions, do not include any information that would break anonymity (if applicable), such as the institution conducting the review.
    \end{itemize}

\item {\bf Declaration of LLM usage}
    \item[] Question: Does the paper describe the usage of LLMs if it is an important, original, or non-standard component of the core methods in this research? Note that if the LLM is used only for writing, editing, or formatting purposes and does not impact the core methodology, scientific rigorousness, or originality of the research, declaration is not required.
    \item[] Answer: \answerNA{}
    \item[] Justification: The core method development in this research does not involve LLMs as any important, original, or non-standard components.
    \item[] Guidelines:
    \begin{itemize}
        \item The answer NA means that the core method development in this research does not involve LLMs as any important, original, or non-standard components.
        \item Please refer to our LLM policy (\url{https://neurips.cc/Conferences/2025/LLM}) for what should or should not be described.
    \end{itemize}

\end{enumerate}